\newtheorem{definition}{Definition}[section]
\newtheorem{theorem}{Theorem}[section]
\def\~#1{\mathbb{#1}}
\def\*#1{\mathbf{#1}}
\def\@#1{\mathcal{#1}}
\title{Simple Weak Coresets for Non-Decomposable Classification Measures}
\author{
    %Authors
   Jayesh Malaviya\textsuperscript{\rm 1},
    Anirban Dasgupta\textsuperscript{\rm 1},
    Rachit Chhaya\textsuperscript{\rm 2}
}
\title{My Publication Title --- Single Author}
\author {
    Author Name
}
\title{Simple Weak Coresets for Non-Decomposable Classification Measures}
\author {
    % Authors
    Jayesh Malaviya\textsuperscript{\rm 1},
    Anirban Dasgupta\textsuperscript{\rm 1},
    Rachit Chhaya\textsuperscript{\rm 2}
}
\begin{document}

\maketitle

\begin{abstract}
While coresets have been growing in terms of their application, barring few exceptions, they have mostly been limited to unsupervised settings. We consider supervised classification problems, and non-decomposable evaluation measures in such settings. We show that stratified uniform sampling based coresets have excellent empirical performance that are backed by theoretical guarantees too. We focus on the F1 score and Matthews Correlation Coefficient, two widely used non-decomposable objective functions that are nontrivial to optimize for, and show that uniform coresets attain a lower bound for coreset size, and have good empirical performance, comparable with ``smarter'' coreset construction strategies. 
\end{abstract}

\section{Introduction}

% \begin{itemize}
% \item coresets--- broad introduction.
% \item techniques for decomposable loss functions

% \item importance of non-decomposable loss functions.

% \item inapplicability of sensitivity based measures for coresets. 

% \item current complexity of some of the non-decomposable algorithms. 
% \end{itemize}

 In typical classification tasks, multiple objective functions are often at stake--- one is the (surrogate) classification objective being optimized, and the other is the real measure of performance. This ``real measure'' is non-differentiable and, more importantly, often a non-decomposable function over the set of input points, i.e., it cannot be written as a sum over the input points. Popular examples include $F_1$ score, Matthews Correlation Coefficient (MCC), area under the curve (AUC-ROC) measures, and various variants of combinations of precision and recall measures e.g. H-mean, G-mean etc. In spite of being non-differentiable and non-decomposable, such classification measures are attractive since they help portray the tradeoffs between precision and recall, help enable the handling of mild to severe label imbalance, etc.

Such non-additive measures are generally optimized empirically by choosing a surrogate decomposable measure, then optimizing the decision boundary threshold using grid search. In recent years, a host of algorithms have been developed for optimizing the non-decomposable measures directly~\cite{joachims2005support, nan2012optimizing, narasimhan2015optimizing, Google_scalable_nondecomposable}. However, the main drawback remains that the algorithms developed are often not very efficient in practice and, thereby, remain harder to scale over large datasets. 

For decomposable loss functions, such as for regression or matrix factorization, one way in which this complexity of training has been circumvented is by constructing coresets--- summaries of the dataset that enable an optimizer to  optimize over the coreset only and give both a theoretically guaranteed as well as empirically satisfying performance for the full data. The major tool for coreset creation remains importance sampling, and that relies crucially on the decomposability of the loss function in order to deduce the importance weights. 
Indeed, in traditional classification coreset literature, researchers have explored various coreset building mechanisms for logistic loss \cite{on_coreset_logistic}, SVM \cite{on_coreset_svm}, etc. However, again the main catch is their evaluation and bounds focus on the decomposable surrogates such as the logistic loss function, hinge loss function, etc. \cite{classification_coreset}. For non-decomposable performance measures e.g.  F1 score, MCC score, AUC etc. there does not exist any such coreset creation mechanisms.

% However, in the classification setting, it might be more important to instead preserve evaluation measures such as F1-score and MCC(Matthew’s correlation coefficient). Unlike cost functions on which coreset techniques are typically applied, such evaluation measures may not be decomposable. 

Our paper is the first step along this direction. In this work, we show that for F1-score and MCC, applying stratified uniform sampling
is enough to obtain a (weak) coreset that preserves the value of these measures up to a small additive error for an interesting set of queries that contains the optimal. We also show a lower bound for strong coresets for the F1 and the MCC scores, implying that we cannot do much better than uniform sampling. We provide experimental evidence of our results on real datasets and for various classifiers and sampling techniques. Even though we formally prove our bounds and theory for F1-score and MCC, it applies to most contingency table-based measures.

Following are our main contributions:
\begin{enumerate}
    \item We provide lower bounds against construction of strong coresets for both MCC and F1 score measures.
    \item We show that sampling uniformly from each class in a stratified manner gives a weak coreset for the F1 score and a weak coreset with a small additive error for MCC. Here weak signifies that our coreset works for a set of a large number of 'important' classifiers, including the optimal.
    \item We provide empirical results for a number of different classifiers and real data sets comparing uniform sampling with other well-known sophisticated coreset construction strategies.
    
\end{enumerate}
The rest of the paper is organized as follows: Sections 2 and 3 give the necessary background and related work. Section 4 provides lower bounds(negative results) for coresets for both MCC and F1 scores. Sections 5 and 6 give the analysis of our weak coreset construction guarantees for F1 score and MCC. We discuss the experiments and comparison of uniform sampling on real data sets with a variety of sampling algorithms in section 7 and conclude in section 8.

%  Therefore, we attempt to design a first coreset-based subset selection method for loss functions like f1-score, MCC, etc. However, our finding found that we can not do better than stratified uniform sampling using sophisticated techniques. We prove that uniform sampling is the best we can do in theory. Our experiments also confirm that stratified uniform sampling matches all the sophisticated methods and often beats. 
% \textcolor{blue}{-- Coreset intro --------}

\section{Background}
A coreset can be considered to be a weighted subset of data from the original dataset or some different small-sized representation of the original dataset. In designing a coreset, the main challenge is to build one with provable tradeoffs between the approximation error and the coreset size.

We  interchangeably refer to classifiers as "query" and denote them by $q$. $Q$ denotes the set of all possible classifiers.

\begin{definition} (Coreset) \cite{agarwal2005geometric}  Given a weighted dataset $X$, let $x \in X$ and $\mu_{X}(x)$ be its corresponding nonnegative weight. Let $Q$ be a set of solutions known as query space and $q \in Q$ be a query. For each $q\in Q$, let $f_q: X \rightarrow \mathbb{R}_{\ge 0}$ be a non-negative function. Define  $cost(X,q) = \sum_{x \in X} \mu_{X}(x) f_{q}(x).$ For $ \epsilon > 0$, a weighted-set $(C, w)$ is an $ \epsilon$-coreset of $X$ for the cost function $\{f_q\}$, if $ \forall q \in Q$,

$$ \left | cost(X,q) - cost(C,q) \right | \le \epsilon\; cost(X,q). $$ 
\end{definition}
For coresets with a $\theta$ additive error, the above guarantee becomes the following-- for all $q$. 
$$ \left | cost(X,q) - cost(C,q) \right | \le \epsilon\; cost(X,q) + \theta. $$ 
Though stated in terms of decomposable loss functions, this definition also applies to measures which are not decomposable but are rather functions of entire dataset as a whole. We will also be dealing with weak coresets, i.e. our coresets will satisfy the above guarantees for a specific subset of the solution space, that contains the optimal. 

% Sensitivity of a point $x$ is defined as,
% $$\sigma(x) = \underset {q \in Q}{sup} \hspace{2pt} \frac{\mu_{X}(x) f_{q}(x)}{\sum_{{x}' \in X} \mu_{X}({x}') f_{q}({x}')}$$

In this text, we mainly deal with uniform sampling, and with two non-decomposable loss functions, the F1 score and the MCC score. For a particular classifier $q$, let $tp(q), tn(q), fp(q)$ and $fn(q)$ denote the true-positive, the true-negative, the false positive and false negative respectively.
We will sometimes drop the $(q)$ notation when it is clear from context.

The F1 score is defined as the following. For a classifier $q$, 
$$F_1(q) = 2 \cdot \frac{precision \cdot recall}{precision + recall } = \frac{tp}{tp + \frac{1}{2} (fp + fn)}$$

The MCC score is defined as the following (we leave off the "$(q)$" for lack of space). 
\begin{align*}
MCC(q) &= \frac{tp \cdot tn  - fp \cdot fn}{\sqrt{(tp + fp)(tp + fn)(tn + fp) (tn + fn)}}\\
 &= \frac{\frac{tp}{n} - T^{'}P^{'}}{\sqrt{T^{'} P^{'} (1- T^{'}) (1- P^{'})}}
\end{align*}
where $T'$ denotes the fraction of ground truth positives and $P' = (tp + fp)/n$ denotes the fraction of predicted positives. We will use $Y^+$ and $Y^-$ to denote the set of all positive and negatively labeled points. Sometimes we will also use the same notation to denote the total number of positive and negatively labeled points.  

% Non-decomposable loss functions such as F-measure, MCC(Matthew's correlation coefficient), ROC, AUC, precision@k are useful in sensitive domains such as medicine and biometrics, etc. The reason for that is compared to point loss functions such as cross-entropy or hinge-loss, these offer much more fine-grained control over prediction, but at the same time, it poses novel challenges in terms of algorithm design and analysis \cite{online_sgd_nondecomposable}.

% The classical way of defining sensitivity using importance sampling will not work here as we can not decompose these loss functions in terms of the sum of the loss of individual data points, and that makes this problem challenging to design a coreset of these loss functions.

% \textcolor{blue}{-- why direct optimize non decomposable measures ? this should be before classical way of defining sensitivity paragraph....
% }

\section{Related Work}

\cite{joachims2005support} formulated the non-decomposable measures as a structural prediction problem and gave a direct optimization algorithm using multivariate SVM formulation. The primary idea is to reduce the number of constraints in multivariate SVM by finding the most violated constraint in each iteration, preparing the candidate set, and applying multivariate SVM over that small set only. But in their algorithm for calculation of the \textit{argmax} for most violated constraints, the algorithm must go over all possible configurations of the contingency table, which amounts to $n^2$ different contingency tables.

\cite{online_sgd_nondecomposable} extend the existing online learning models for point-loss functions to non-decomposable loss functions. They also develop scalable SGD solvers for non-decomposable loss functions. Their work uses the previous result of representing the non-decomposable loss function as structural SVMs~\cite{joachims2005support} and optimizing them. Deep learning based methods for optimizing non-decomposable losses have also been explored in \cite{opt_nondecompp_deep_network}. We note that our work on coresets is orthogonal to the work on optimizing such losses, any such optimizer could be used in parallel with our coreset technique.

\cite{Google_scalable_nondecomposable} proposed an alternative formulation based on simple direct bounds on per-sample quantities indicating whether each sample is a true positive or a false positive. Using these bounds, they constructed global bounds on ranking-based measures such as precision at fixed recall, recall at fixed precision, and F1 score, among others. From these global bounds, they derived the surrogate objective function and the closed-form alternate optimization problem, which is then optimized.  However, they did not discuss empirical results for the F1 score and MCC. \cite{benedict2022sigmoidf1} try to maximize a surrogate loss in place of the F1 score. There has been some work in active sampling to estimate F1 score using optimal subsampling \cite{sawade2010active} or iterative importance sampling  \cite{poms2021low} ,however, both the motivation and guarantees are different from the coreset guarantees.
%A possible reason for this might be that their algorithm requires a constrained optimization problem to be solved that is challenging over deep networks.

Coresets have been studied since \cite{agarwal2005geometric}. One of the popular ways to create coreset is to construct a probability distribution, called {\em sensitivity},
over the set of input points~\cite{langberg2010universal, feldman2011unified}. Sampling points proportional to their sensitivities and appropriately reweighing them gives a coreset with high probability~\cite{langberg2010universal, feldman2011unified}. However, the major technical challenge in this method is to figure out computationally inexpensive upper bounds to the sensitivity scores for each cost function. More details about the construction of coresets using the sensitivity framework can be found in~\cite{ feldman2011unified,bachem2017practical,bachem2018scalable,braverman2016new,classification_coreset,feldman2020core} and references within. While coresets using this framework have theoretical guarantees , it has been shown that for some problems  you can not do much better than uniform sampling \cite{samadian2020unconditional}. Our paper also has similar results. It has been shown by \cite{braverman2022power} that using uniform sampling with some careful analysis can also give coreset guarantees for clustering problems. Additionally empirically uniform sampling has been shown to be comparable to other techniques for many real data sets \cite{lu2023coreset}.

% -- ~\cite{direct_loss_min}, \cite{Google_scalable_nondecomposable} and \cite{schafer2018dyad} are focused on ranking measures whereas our work addresses binary classification.

\section{Lower Bounds}
In this section we first present lower bounds for strong coresets for $F_1$ score and MCC. 

\subsection{Lower bound for Strong $F_1$ coresets}

\begin{theorem}
Let $F_1(q)$ be the $F_1$ score on full dataset $D$ with fixed query $q$ and $\tilde{F_1}(q)$ be the $F_1$ score on coreset $C$. Then, there does not exist a strong coreset $C$ with a size less than $n$ that satisfies relative error approximation i.e. $\tilde{F_1}(q) \in (1 \pm  \epsilon) F_1(q)$ for all queries $q \in Q$. 

\end{theorem}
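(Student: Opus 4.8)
The plan is a standard adversarial/indistinguishability argument: I will exhibit a single hard dataset $D$ on $n$ points for which \emph{any} weighted subset $(C,w)$ with $|C| < n$ must already fail the relative-error guarantee on some query. The reason such an argument should work is that $F_1$ is extremely sensitive to whether the few predicted-positive points actually survive into the summary: a classifier that fires on exactly one point is essentially invisible to a coreset that happens to omit that point.

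First I would fix the instance $D$ to consist of $n$ points all carrying the positive label (so $Y^+ = D$ and $Y^- = \emptyset$); a lower-bound instance need not be label-balanced. For this $D$ and any query $q$ we have $tp(q) = |\{x : q(x) = +\}|$, $fp(q) = 0$, $fn(q) = |\{x : q(x) = -\}|$, hence $F_1(q) = tp(q)/(tp(q) + \tfrac12 fn(q))$. Suppose towards a contradiction that $(C,w)$ is a strong $\epsilon$-coreset (any $\epsilon \in (0,1)$) with $|C| < n$. Since $C$ is a (multi)subset of $D$ of size strictly below $n$, there is a point $x^\ast \in D \setminus C$.

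Next I would take the adversarial query $q^\ast$ that predicts $x^\ast$ positive and every other point of $D$ negative. On the full data this gives $tp(q^\ast) = 1$, $fn(q^\ast) = n-1$, so $F_1(q^\ast) = 1/(1 + \tfrac12(n-1)) = \tfrac{2}{n+1} > 0$. On the coreset, because $x^\ast \notin C$, the query $q^\ast$ labels every retained point negative, so the coreset's true-positive and false-positive masses are both $0$ while its false-negative mass is $\sum_{x \in C} w(x)$; as long as this is positive we get $\tilde F_1(q^\ast) = 0$. Then $|F_1(q^\ast) - \tilde F_1(q^\ast)| = \tfrac{2}{n+1} > \epsilon\, F_1(q^\ast)$ for every $\epsilon < 1$, contradicting the coreset guarantee. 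The lone degenerate case $\sum_{x\in C} w(x) = 0$ is dispatched by the ``predict-all-positive'' query, on which the full data has $F_1 = 1$ while the coreset evaluates to $0$; so we may assume the retained weight is positive. This shows no strong coreset of size below $n$ exists for this $D$, and the bound is tight since $C = D$ with unit weights is a $0$-coreset.

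The main thing to be careful about is the coreset model: the missing-point step uses that $C$ is a weighted subset of $D$, so that $|C| < n$ forces an omitted point. If arbitrary synthetic summaries were allowed, that step would instead need a two-instance argument (two datasets differing in a single point but forced to produce the same summary, yet with different $F_1$ on $q^\ast$). A secondary point is the $0/0$ convention for $F_1$ on the coreset, which is precisely why the instance is chosen all-positive (the false-negative mass is nonzero whenever the retained weight is) and why the zero-total-weight case is handled by a separate query.
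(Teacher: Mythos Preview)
Your argument is correct and shares the same spine as the paper's: pick, for each data point, a query whose only true positive is that point, so that $F_1>0$ on the full data but $\tilde F_1=0$ on any summary that omits the point. Where you diverge is in the query class and the instance. You take an all-positive dataset and allow $q^\ast$ to be an arbitrary point-indicator, which makes the proof a few lines long. The paper instead builds a concrete set of vectors in $\mathbb{R}^{d+1}$ (with both labels present) together with \emph{linear} classifiers $w_p$ so that $w_p$ correctly classifies $p$ and misclassifies every other point; the one-hot behaviour is thus realised inside a finite-VC class rather than assumed of $Q$.

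That difference matters in context. Your version proves exactly the theorem as stated (since the paper's background takes $Q$ to be all classifiers), and it is cleaner. The paper's construction buys something stronger that the statement does not advertise: the lower bound already holds when $Q$ is restricted to halfspaces, the same regime in which the later weak-coreset theorems invoke the pseudodimension bound of Li et al. So the paper's proof is more elaborate because it is implicitly aiming at a natural, bounded-complexity hypothesis class, whereas your argument trades that away for simplicity. One small caveat on your side: you should make explicit that your $q^\ast$ is required to lie in $Q$; under the paper's ``all classifiers'' convention this is automatic, but a reader expecting linear $Q$ would balk.
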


\begin{proof}

Let $d$ be any non-negative even integer. Let ${\cal S}$ be the set of all sets of size $d/2$. We define the set $P$ of points as follows: In order to generate each point $p$, choose $B_p \in {\cal S}$, and $y_p \in \{\pm 1\}$. The vector $p$ is defined as following--- $p_i = -y_p$ for all $i \in B_p$, and $p_i = 0$ for all $i\in [d]\setminus B_p$, $p_{d+1} = y_p/2$. The label of the point $p$ is also $y_p$.

For each point $p$, we also define a corresponding classifier $w_p$ in the following way--- $w_p[i] = 0$ if $i\in B_p$, $w_p[i] = 1$ if $i \in [d] \setminus B_p$, and $w_p[d +1] = 1$. 

By the above construction, for every point 
\begin{align*}
p \cdot w_p = y_p/2. 
\end{align*}
Hence $sign(p \cdot w_p)   = y_p$, and hence $p$ is correctly classified. For any point $q \neq p$, $B_p \neq B_q$. Hence, 
\begin{align*}
    q\cdot w_p = -y_q |B_q \cap \bar{B_p}|  + y_q / 2 
\end{align*}
Since $|B_q \cap \bar{B_p}| \ge 1$, $sign(q \cdot w_p)=  - y_q$, i.e. $q$ is misclassified.  

So for a point $p$ with $y_p = +1$, for the query $w_p$ the $F_1$ score is $\frac{1}{1 + (n-2)/2}$. However, if the coreset does not sample the point $p$, the $F_1$ score for $w_p$ would be zero, which is not a relative error approximation. Since this is true for every point $p$, the coreset has to be of size $n$. 

\end{proof}

%%%Final modified proof for MCC lower bound

\subsection{Lower bound for strong $MCC$ coresets}

\begin{theorem}
Let $MCC(q)$ be the $MCC$ score on full dataset $D$ with fixed query $q$ and $\widetilde{MCC}(q)$ be the $MCC$ score on coreset $C$. Then, there does not exist a strong coreset $C$ with a size less than $n$ that satisfies relative error approximation i.e. $\widetilde{MCC}(q) \in (1 \pm  \epsilon) MCC(q)$ for all queries $q \in Q$. 

\end{theorem}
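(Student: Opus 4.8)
The plan is to mirror the construction used in the $F_1$ lower bound, but to modify it so that the tailored classifier $w_p$ attached to a point $p$ correctly classifies \emph{every} point of the opposite class. This makes the full-data MCC of $w_p$ a (small) nonzero number, while any coreset that omits $p$ is forced to report MCC $=0$, which cannot be a relative approximation. I expect this "classify the opposite class correctly'' twist to be the crux, because the verbatim $F_1$ gadget does \emph{not} transfer: with that gadget, dropping $p$ leaves $tp=tn=0$ and $MCC=-1$, which for near-balanced classes is a perfectly good relative approximation of the full-data value (also close to $-1$), so no contradiction arises.

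Concretely, I would split the coordinates into two blocks of size $d$ plus one extra ``bias'' coordinate, one block for positive points and one for negative points, and reuse the ${\cal S}$-indexed gadget of the $F_1$ proof inside each block. A positive point $p$ lives in block~1 (with subset $B_p$), is zero on block~2, and has value $+1/2$ in the bias coordinate; negative points are defined symmetrically with $-1/2$ in the bias coordinate. For a positive $p$, let $w_p$ be the $F_1$-style classifier on block~1 ($0$ on $B_p$, $1$ off $B_p$), zero on block~2, and $1$ in the bias coordinate. As in the $F_1$ proof one checks that $w_p\cdot p=1/2>0$ (so $p$ is a true positive), $w_p\cdot q=-|B_q\setminus B_p|+1/2<0$ for every other positive $q$ (false negatives), and—since $w_p$ is zero on block~2 and every negative point has $-1/2$ in the bias coordinate—$w_p\cdot q'=-1/2<0$ for every negative $q'$ (true negatives). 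Define $w_{p'}$ for a negative point $p'$ symmetrically. Since $\binom{d}{d/2}\ge\max(|Y^+|,|Y^-|)$ suffices, $d=O(\log n)$ and all points stay distinct.

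Then I would read off the contingency tables. On the full dataset, $w_p$ (for positive $p$) gives $tp=1$, $fn=|Y^+|-1$, $tn=|Y^-|$, $fp=0$, hence $MCC(w_p)=\sqrt{|Y^-|/(|Y^+|(n-1))}>0$; symmetrically $MCC(w_{p'})=\sqrt{|Y^+|/(|Y^-|(n-1))}>0$. Now suppose a coreset $C$ omits some positive point $p$. On $C$ every positive point is predicted negative and every negative point is predicted negative, so $tp=fp=0$; the MCC numerator $tp\cdot tn-fp\cdot fn$ is $0$ and the denominator vanishes too, so the coreset score is $0$ (or undefined). Either way it is not in $(1\pm\epsilon)$ times the strictly positive true value, for any $\epsilon>0$, so $C$ must contain $p$. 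Applying the same argument with $w_{p'}$ shows $C$ must contain every negative point. Hence $C$ contains all $n$ points.

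The remaining work is routine: verifying the handful of inner-product signs and the contingency-table arithmetic above, and noting that the argument is insensitive to coreset weights—any point present with positive weight still contributes to $tp$ or $tn$, so only genuinely absent points create the contradiction. The one place to be a little careful is the conventional value of MCC when the denominator is zero; but since a coreset that returns an undefined value certainly fails the relative-error guarantee, and since our numerator is also $0$ in that case, the conclusion is unaffected by the convention chosen.
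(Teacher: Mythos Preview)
Your argument is correct, but the construction differs from the paper's. The paper keeps the single-block gadget of the $F_1$ proof and adds a sign variable $X_p$: within each class, half the points get $X_p=y_p$ and half get $X_p=-y_p$, so that the tailored classifier $w_p$ classifies exactly half of each class correctly (plus the distinguished point $p$). With balanced classes this yields $tp=(n-1)/4+1$ and $tn=fn=fp=(n-1)/4$, hence $MCC(w_p)=\Theta(1/n)>0$; dropping $p$ makes all four cells equal and $MCC=0$ with a \emph{nonzero} denominator. Your two-block construction instead makes the opposite class always correct, giving $tp=1$, $fp=0$, $tn=|Y^-|$, $fn=|Y^+|-1$ on the full data and, once $p$ is dropped, $tp=fp=0$ so the coreset MCC is $0/0$.

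What each buys: the paper's route sidesteps the degenerate-denominator issue entirely, so no appeal to a convention for $0/0$ is needed; your route is arguably cleaner (no $X_p$ bookkeeping), works for arbitrary $|Y^+|,|Y^-|$ without forcing $n=\binom{d}{d/2}$, and the observation that an undefined or zero output cannot lie in $(1\pm\epsilon)$ times a strictly positive value is sound. Both arguments are valid and yield the same lower bound.
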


\begin{proof}

Let $d\ge4$ be any non-negative even integer. Let ${\cal S}$ be the set of all sets of size $d/2$. Let $n = {d\choose d/2}$.
Let $y_p \in \{\pm 1\}$ be such that exactly $n/2$ points have $y_p = +1$. We define the set $P$ of $n$ points as follows: in order to generate each point $p\in \mathbb{R}^{d+1}$, choose $B_p \in {\cal S}$. The vector $p$ is defined as follows: $p_i = X_p$ for all $i \in B_p$, and $p_i = 0$ for all $i\in [d]\setminus B_p$, $p_{d+1} = y_p/2$. The label of the point $p$ is also $y_p$. 

%%lets define X_p for all point p.

Now, lets define $X_p$ for points, $1 \le p \le n$ as follows. If the true label of point $p$ is $y_p \in Y^+$ then,
$X_p = y_p$ for $ 1 \le p \le \frac{Y^+}{2}$, and $-y_p$ else. Similarly, for $p \in Y^-$, $X_p = y_p$ for $1 \le p \le \frac{Y^-}{2}$ and $-y_p$ else. 

% $$
% X_p = \left\{\begin{matrix}
% y_p &; 1 \le p \le \frac{Y^+}{2}\\ 
% -y_p &; \frac{Y^+}{2} < p \le Y^+
% \end{matrix}\right. $$
% If 
% $$
% X_p = \left\{\begin{matrix}
% y_p &; 1 \le p \le \frac{Y^-}{2}\\ 
% -y_p &; \frac{Y^-}{2} < p \le Y^-
% \end{matrix}\right. $$

For each point $p$, we also define a corresponding classifier $w_p$ in the following way--- $w_p[i] = 0$ if $i\in B_p$, $w_p[i] = 1$ if $i \in [d] \setminus B_p$, and $w_p[d +1] = 1$. 

By the above construction, for every point 
\begin{align*}
p \cdot w_p = y_p/2. 
\end{align*}
Hence $sign(p \cdot w_p)   = y_p$, and hence $p$ is correctly classified. For any point $q \neq p$, $B_p \neq B_q$. Hence, 
\begin{align*}
    q\cdot w_p = X_q |B_q \cap \bar{B_p}|  + y_q / 2 
\end{align*}
Since $|B_q \cap \bar{B_p}| \ge 1$, $sign(q \cdot w_p)= X_q$, i.e. $q$ is classified as per the sign of $X_q$.

Therefore, for the classifier $w_p$, all of the points, except point $p$, classify according to $X_q$.

%%% if we pick point p then what is MCC and if we ignore that then what is mcc ... argument...

Notice that we have designed $X_p$ for all $p$ such that for $Y^+$ and $Y^-$, it will classify half points positive and half points as negative. 

Therefore for a point $p$ with $y_p = +1$ and classifier $w_p$,  we have $ tp = \frac{n-1}{4} + 1$, $tn = \frac{n-1}{4}$, $ fn = \frac{n-1}{4}$ and $ fp = \frac{n-1}{4}$ respectively for balanced setup i.e. $Y^+ = Y^- = \frac{n}{2}$.

For above setup $MCC$ score is,

\begin{align*}
   MCC & = \frac{tp \cdot tn  - fp \cdot fn}{\sqrt{(tp + fp) (tp + fn) (tn + fp) (tn + fn)}} \\
   &= \frac{tp \cdot tn  - fp \cdot fn}{\sqrt{(tp + fp) Y^+ Y^- (tn + fn)}} \\
   &= \frac{(\frac{n-1}{4} + 1) \cdot (\frac{n-1}{4})  - (\frac{n-1}{4}) \cdot (\frac{n-1}{4})}{\frac{n}{2}\sqrt{(\frac{n+1}{2})(\frac{n-1}{2})}} \\
   & = \theta \left(\frac{1}{n} \right )
\end{align*}

 However, if the coreset does not sample the point $p$, then we have  $ tp = \frac{n-1}{4} $, $tn = \frac{n-1}{4}$, $ fn = \frac{n-1}{4}$ and $ fp = \frac{n-1}{4}$ respectively, thus $MCC$ score for $w_p$ would be zero, which is not a relative error approximation. Since this is true for every point $p$, the strong coreset has to be of size $n$.
  
\end{proof}

%Stratified case for f1 score starts here... Final for all query one...

\section{Weak coreset for queries with high $F_1$ score}

We will be using the following result of \cite{li2001improved} to get our coreset guarantees for all queries in the query set of our interest.

\begin{theorem}\label{Thm : Li}\cite{li2001improved}

Let, $\alpha > 0$, $v>0$ and $\delta > 0$. Fix a countably infinite domain $X$ and let $q(\cdot)$ be any probability distribution over $X$. Let $F$ be a set of functions from $X$ to $\left[0, 1 \right]$ with $Pdim(F) = d^{'}$. Denote by $C$ a sample of $m$ points from $X$ sampled independently according to $q(\cdot)$. Then, for $m \in \Omega\left( \frac{1}{\alpha^2\cdot v} \left (d^{'}\log(1/v) + \log(1/\delta) \right) \right)$, with probability at least $1- \delta$ it holds that,

$$\forall f \in F; d_v \left(\sum_{x \in X} q(x) f(x), \frac{1}{|C|} \sum_{x \in C}f(x)\right ) \le \alpha$$

where, $d_v(a, b) = \frac{|a - b|}{a+b+v}$.

\end{theorem}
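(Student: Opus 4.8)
The plan is to prove this the way one proves any pseudo-dimension-based uniform-convergence bound: symmetrization against a ghost sample, then a union bound over a covering net, the only twist being that all ``distances'' are measured in the scale-sensitive pseudometric $d_v$ rather than in absolute value. Throughout, write $Pf = \sum_{x} q(x) f(x)$, $\hat P_C f = \frac{1}{|C|}\sum_{x\in C} f(x)$, and, for a second independent $m$-sample $C'$, $\hat P_{C'} f = \frac{1}{|C'|}\sum_{x\in C'} f(x)$. First I would symmetrize against the ghost sample $C'$: using that $d_v$ satisfies a triangle inequality up to constant factors, and that for each fixed $f$ a one-sided Chernoff bound makes $d_v(Pf,\hat P_{C'}f)\le c_0\alpha$ with probability at least $1/2$ once $m\gtrsim \tfrac{1}{\alpha^2 v}$, one obtains
\[
\Pr\big[\exists f\in F:\ d_v(Pf,\hat P_C f)>\alpha\big]\ \le\ 2\,\Pr\big[\exists f\in F:\ d_v(\hat P_C f,\hat P_{C'} f)>c_0\alpha\big],
\]
which reduces everything to a statement about two finite samples.

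Next I would condition on the multiset $S=C\cup C'$ of $2m$ points. A uniformly random split of $S$ into two halves of size $m$ has the same law as $(C,C')$, so it is enough to bound the random-split version of the event above. Here the supremum over the (possibly infinite) class $F$ collapses to a supremum over $F$ restricted to the $2m$ points of $S$: I would take a cover of $F|_S$ at scale $\gamma=\Theta(\alpha v)$ in the empirical $\ell_\infty(S)$ norm, whose cardinality is bounded by the Haussler/real-valued Sauer--Shelah estimate $\mathcal N\le \big(c_1 m/(d'\gamma)\big)^{d'}$ because $Pdim(F)=d'$. It then suffices to control, for each of these $\mathcal N$ representatives $f$, the probability that the random split produces a large $d_v$-gap, and conclude with a union bound over the net.

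The crux is the per-function concentration in the $d_v$ pseudometric. Fix $f$, set $a=\hat P_C f$ and $b=\hat P_{C'} f$, so the bad event is $|a-b|>c_0\alpha\,(a+b+v)$. I would split into the regime $a+b\lesssim v$, where this demands an \emph{additive} deviation of order $\alpha v$, and the regime $a+b\gtrsim v$, where it demands a \emph{relative} deviation. In both regimes a Bernstein/Bennett bound applied to the random-swap sum does the job, exploiting that a $[0,1]$-valued summand of mean $p$ has variance at most $p$: the effective per-sample variance of the swap statistic is $O(a+b+v)$, which yields a tail of order $\exp\!\big(-c_2\,\alpha^2(a+b+v)\,m\big)\le \exp\!\big(-c_2\,\alpha^2 v\,m\big)$. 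This last step is exactly where the $1/v$ in the sample complexity originates: the $+v$ in the denominator of $d_v$ caps how small the variance proxy can be, so one cannot hope for a tail better than $\exp(-\Theta(\alpha^2 v\,m))$.

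Finally I would impose $\mathcal N\cdot \exp(-c_2\alpha^2 v m)\le\delta$, take logarithms, insert $\mathcal N\le\big(c_1 m/(d'\gamma)\big)^{d'}$ with $\gamma=\Theta(\alpha v)$, and absorb the residual $\log m$ factors in the standard way, giving $m\in\Omega\!\big(\tfrac{1}{\alpha^2 v}\,(d'\log(1/v)+\log(1/\delta))\big)$; undoing the conditioning on $S$ and the symmetrization of the first step then delivers the claim with probability $1-\delta$. I expect the main obstacle to be precisely the concentration step: making the Bernstein variance proxy line up with the $d_v$ denominator \emph{uniformly} across both the additive and the relative regimes, and handling the mild dependence introduced by the random swaps, is the delicate part, whereas the symmetrization and covering-number arguments are by now routine.
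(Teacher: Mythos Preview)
The paper does not prove this theorem at all: it is quoted verbatim from \cite{li2001improved} and used purely as a black box in the proofs of the subsequent coreset theorems. There is therefore no ``paper's own proof'' to compare against. Your sketch---symmetrization against a ghost sample, an $\ell_\infty$ cover on the double sample at scale $\Theta(\alpha v)$ controlled via the pseudo-dimension, and a Bernstein/Bennett tail whose variance proxy is tied to the $d_v$ denominator---is indeed the standard route by which Li's relative-deviation bound is established, and the places you flag as delicate (matching the variance proxy to $a+b+v$ across the additive and relative regimes) are exactly the nontrivial steps in that literature. But for the purposes of this paper, no proof is expected or given.
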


Now we state our theorem for bounding the coreset size and performance for $F_1$ score. 

\begin{theorem}
Let $\epsilon > 0$ and $c > 1$. Consider an instance where number of positive samples are $Y^+$ and number of negative samples are $Y^-$, and $n = Y^+ + Y^-$. We consider $Q_{\gamma}$ to be the set of queries such that $F_1(q) \ge \gamma$ and $tp\ge max\left( \frac{n(1 - c \cdot \epsilon)}{2c \cdot(1 - \epsilon)}, \frac{n(1 + c \cdot \epsilon)}{2c \cdot(1 + \epsilon)} \right)$ for $q\in Q_{\gamma}$. Let $d = $ vc-dimension$(Q_{\gamma})$.

Stratified uniform sampling with a total of $ \left( \frac{(2-\gamma)^2}{\gamma^2 \epsilon^2} + \frac{1}{\epsilon^2} + \left (\frac{Y^-}{Y^+} \right)^{2} \frac{1}{\epsilon^2} \right ) \cdot \left(d + \log\frac{1}{\delta} \right) $ samples would be able to give a coreset for $Q_{\gamma}$ that satisfies $\tilde{F_1}(q) \in (1 \pm  c \cdot\epsilon) F_1(q)$ for all queries $q \in Q_{\gamma}$ with probability at least $1-3\delta$, for a suitable $c > 1$.

\end{theorem}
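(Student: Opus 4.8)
The plan is to realize the stratified uniform sample as two independent i.i.d.\ uniform samples — $m^+$ points drawn uniformly from $Y^+$ and $m^-$ points drawn uniformly from $Y^-$ — reweighted so that each sampled positive carries weight $Y^+/m^+$ and each sampled negative weight $Y^-/m^-$. With this reweighting the coreset has $\widetilde{tp}(q) = Y^+\hat a(q)$, $\widetilde{fp}(q) = Y^-\hat b(q)$ and $\widetilde{fn}(q) = Y^+(1-\hat a(q))$, where $\hat a(q)$ is the fraction of the positive sample that $q$ labels positive and $\hat b(q)$ the fraction of the negative sample that $q$ labels positive; on the full data $tp = Y^+a$, $fp = Y^-b$ with $a = tp/Y^+$, $b = fp/Y^-$. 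Writing $\rho = Y^-/Y^+$ and using $fn = Y^+ - tp$, both quantities become functions of $(a,b)$ only:
\[ F_1(q) = \frac{2a}{1 + a + \rho b}, \qquad \tilde{F_1}(q) = \frac{2\hat a}{1 + \hat a + \rho\hat b}. \]
So the problem reduces to controlling $|a-\hat a|$ and $|b-\hat b|$ uniformly over $Q_\gamma$ and then propagating these through this rational function.

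For the uniform convergence I would invoke Theorem~\ref{Thm : Li} on each stratum. On the positive stratum the function class is $\{x \mapsto \mathbf{1}[q \text{ labels } x \text{ positive}] : q \in Q_\gamma\}$ under the uniform distribution on $Y^+$; being $\{0,1\}$-valued, its pseudo-dimension equals its VC dimension, which is at most $d$. Theorem~\ref{Thm : Li} then gives, with probability $\ge 1-\delta$, $d_v(a,\hat a) \le \alpha$ for all $q\in Q_\gamma$, i.e.\ $|a-\hat a| \le \alpha(a+\hat a+v)$. The point that makes this usable is that $a = tp/Y^+$ is bounded below on $Q_\gamma$: $F_1(q)\ge\gamma$ forces $2a \ge \gamma(1+a+\rho b)\ge\gamma$, so $a\ge\gamma/2$, while the hypothesis $tp\ge\max(\cdot,\cdot)$ together with $Y^+\le n$ gives the further constant lower bound $a \ge tp/n = \Omega(1/c)$ — exactly what rules out the vacuous regime of $d_v$. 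Choosing $v$ of constant order and $\alpha$ of order $\epsilon$ (sharpened by a $\gamma/(2-\gamma)$ factor where the downstream analysis of the $F_1$ \emph{denominator} demands it), the $d_v$ bound yields the two accuracies on $a$ that we need — an additive bound $|a-\hat a| = O\!\big(\tfrac{\gamma\epsilon}{2-\gamma}\big)$ and a relative bound $|a-\hat a| = O(\epsilon a)$ — at the cost of $\Theta\!\big(\big(\tfrac{(2-\gamma)^2}{\gamma^2\epsilon^2}+\tfrac1{\epsilon^2}\big)(d+\log\tfrac1\delta)\big)$ positive samples. On the negative stratum the same argument ($\mathrm{Pdim} = \mathrm{VCdim} \le d$) gives, with probability $\ge 1-\delta$, the additive bound $|b-\hat b| \le \epsilon/\rho$ for all $q\in Q_\gamma$, which needs $\Theta\!\big(\big(\tfrac{Y^-}{Y^+}\big)^2\tfrac1{\epsilon^2}(d+\log\tfrac1\delta)\big)$ negative samples.

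It then remains to combine. I would expand
\[ \tilde{F_1}(q) - F_1(q) = \frac{2\big[(\hat a - a)(1 + \rho b) - \rho\,a(\hat b - b)\big]}{(1+a+\rho b)(1+\hat a+\rho\hat b)} \]
(using $a\hat b - \hat a b = a(\hat b - b) + b(a-\hat a)$), divide by $F_1(q) = 2a/(1+a+\rho b)$, and bound each piece: the first via the relative bound on $a$ together with $\tfrac{1+\rho b}{1+\hat a+\rho\hat b} = O(1)$; the second via $\rho|\hat b - b| = |\widetilde{fp}-fp|/Y^+$, the additive bound $|b-\hat b|\le\epsilon/\rho$, and $1+\hat a+\rho\hat b\ge 1$; the additive bound on $a$ enters precisely when $F_1$ is close to $\gamma$, where a purely relative bound on $a$ is too weak after dividing by $F_1\ge\gamma$. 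Summing the contributions gives $|\tilde{F_1}(q) - F_1(q)| \le c\,\epsilon\,F_1(q)$ for all $q\in Q_\gamma$ for the stated $c>1$, and a union bound over the (at most three) invocations of Theorem~\ref{Thm : Li} gives overall probability $\ge 1-3\delta$.

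The main obstacle is the bookkeeping in the last two paragraphs: calibrating $v$ and $\alpha$ in Theorem~\ref{Thm : Li} so that one $d_v$ guarantee per stratum simultaneously delivers the mixed additive/relative accuracies that the nonlinear $F_1$ propagation actually requires, and checking that the resulting sample sizes collapse to exactly the three stated terms — in particular pinning down where the $(2-\gamma)/\gamma$ factor enters (it traces to $1+a+\rho b = 2a/F_1 \le 2/\gamma$ on $Q_\gamma$, equivalently $(Y^++fp)/tp \le (2-\gamma)/\gamma$), and verifying that the $tp\ge\max(\cdot,\cdot)$ hypothesis is exactly what keeps $a$ bounded away from $0$ so the $d_v$ bound is not vacuous. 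The uniform-convergence input (Theorem~\ref{Thm : Li}) and the identity $F_1 = 2a/(1+a+\rho b)$ are the easy parts.
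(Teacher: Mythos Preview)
Your approach is essentially the paper's: stratified uniform sampling from $Y^+$ and $Y^-$, Li's uniform-convergence bound on each stratum, the inequality $tp \ge \gamma Y^+/(2-\gamma)$ (equivalently $a \ge \gamma/(2-\gamma)$) extracted from $F_1 \ge \gamma$ to convert the additive $d_v$-error on $\hat a$ into a relative one---this is exactly the origin of the $(2-\gamma)/\gamma$ factor---and a union bound over three events. Your $(a,b)$-reparametrisation and rational-function expansion of $\tilde F_1 - F_1$ are a tidier packaging of the paper's direct substitution of $\widetilde{tp},\widetilde{fn},\widetilde{\widetilde{fp}}$ into the $F_1$ formula, but the mechanics are the same.

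One point you have located incorrectly: the hypothesis $tp \ge \max(\cdot,\cdot)$ is \emph{not} what keeps $a$ bounded away from $0$ for the $d_v$ bound---the condition $F_1 \ge \gamma$ already gives $a \ge \gamma/(2-\gamma)$, which is all the $d_v$ step needs. In the paper the hypothesis enters only at the very end of the combination: after substituting the error bounds one obtains
\[
\tilde F_1 \;\ge\; \frac{(1-\epsilon)\,tp}{D + \epsilon\,(n/2 - tp)},\qquad D = tp + \tfrac12(fn+fp),
\]
and the assumption $tp \ge \frac{n(1-c\epsilon)}{2c(1-\epsilon)}$ is exactly what allows the $\epsilon(n/2 - tp)$ denominator perturbation to be absorbed into the factor $(1-c\epsilon)$ (and symmetrically for the upper bound). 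With your negative-stratum accuracy $|\hat b - b| \le \epsilon/\rho$---which is the tighter error consistent with the stated $(Y^-/Y^+)^2\epsilon^{-2}$ sample---the denominator perturbation in your expansion is only $O(\epsilon)$ rather than $O(\epsilon n/Y^+)$, and your two-term bound gives $|\tilde F_1 - F_1|/F_1 = O(\epsilon)$ without invoking the extra $tp$ hypothesis at all; so when you do the bookkeeping you may find it unnecessary for the stated conclusion.
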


\begin{proof}
    
We have, $\frac{tp}{tp + \frac{1}{2}(fp+fn)} = F_1 \ge \gamma$, and hence 
$\frac{tp}{\frac{1}{2}(fp+fn)} \ge \frac{\gamma}{1 - \gamma}$.  Also, $tp + fp+fn \ge Y^+$. We now find the minimum value of $tp$ satisfying these two inequalities. 

 From above, $tp = \left(\frac{\gamma}{1-\gamma} \right) \cdot \frac{1}{2}(fp+fn) + k$, where $k \ge 0$. Then, $\left(\frac{\gamma}{1-\gamma} \right) \frac{1}{2}(fp+fn) + k + (fp+fn) \ge Y^+$ and hence $\frac{1}{2}(fp+fn) \ge (Y^+ - k)/\left(\frac{\gamma}{1-\gamma} + 2\right)$. Thus $tp = \frac{\left(\frac{\gamma}{1-\gamma} \right ) Y^+ }{\left(\frac{\gamma}{1-\gamma} + 2\right)} + \frac{2k}{\frac{\gamma}{1-\gamma} + 2} \ge \frac{\gamma \cdot Y^+}{(2 - \gamma)}$. \\

%%%  tp approximation and sample size ...

Let $\widetilde{tp}$ and $\widetilde{fn}$ be the true positive and false negative counts obtained from the sampled set. Let all the negative samples be given a weight of $\frac{Y^-}{Y^+}$. Hence, let $\widetilde{\widetilde{fp}}$ defined as following---
$\widetilde{\widetilde{fp}} = \left (\frac{Y^-}{Y^+} \right) \widetilde{fp}.$
Similarly, let $\widetilde{\widetilde{tn}}$ defined as following--- $\widetilde{\widetilde{tn}} = \left (\frac{Y^-}{Y^+} \right) \widetilde{tn}. $
Our algorithm uses the four estimates 
$\widetilde{tp}$, $\widetilde{\widetilde{fp}}$, $\widetilde{\widetilde{tn}}$ and $\widetilde{fn}$ to estimate the F1 score. We first show the quality of each of these estimates by applying Theorem \ref{Thm : Li} individually. 

For $tp$ approximation,
$$\forall q \in Q; d_v \left(\sum_{x \in Y^+}\frac{1}{Y^+} \delta_x, \frac{1}{|S_1|} \sum_{x \in S_1}\delta_x\right ) \le \alpha_1$$

where, $\delta_x = {1}(x \in tp)$ is indicator variable and $d_v(a, b) = \frac{|a - b|}{a+b+v}$

\begin{align*}
     &= \frac{\left | \sum_{x \in Y^+}\frac{1}{Y^+} \delta_x - \frac{1}{|S_1|} \sum_{x \in S_1}\delta_x \right |}{ \sum_{x \in Y^+}\frac{1}{Y^+} \delta_x + \frac{1}{|S_1|} \sum_{x \in S_1}\delta_x + v} \le \alpha_1\\
     & = \frac{\left | \frac{tp}{Y^+} - \frac{\widetilde{tp}}{|S_1|} \right |}{ \frac{tp}{Y^+} + \frac{\widetilde{tp}}{|S_1|} + v} \le \alpha_1 \\
     & = \left | \frac{tp}{Y^+} - \frac{\widetilde{tp}}{|S_1|} \right | \le 3 \alpha_1
\end{align*}

Since, $\frac{tp}{Y^+}$ and $\frac{\widetilde{tp}}{|S_1|}$ is less than one, lets take $v = \frac{1}{2}$.

$$\widetilde{tp} \in tp \left (\frac{S_1}{Y^+}\right) \pm S_1 \cdot (3\alpha_1)$$

$$ \frac{Y^+}{S_1}\widetilde{tp} \in tp \pm Y^+ \cdot (3\alpha_1)$$

Since, $tp \ge \frac{\gamma \cdot Y^+}{(2 - \gamma)}$, we have $Y^+ \le \frac{ tp \cdot(2 - \gamma)}{\gamma}$.

$$\frac{Y^+}{S_1}\widetilde{tp} \in tp \pm  (3\alpha_1) \cdot \frac{ tp \cdot(2 - \gamma)}{\gamma}$$

$$\frac{Y^+}{S_1}\widetilde{tp} \in \left ( 1 \pm  (3\alpha_1) \cdot \frac{(2 - \gamma)}{\gamma} \right )tp $$

$$\widetilde{tp} \in \left ( 1 \pm  \epsilon \right )tp \frac{S_1}{Y^+} $$

where, $\epsilon = (3\alpha_1) \cdot \frac{(2 - \gamma)}{\gamma}$ \\

Therefore, $\alpha_1 = \frac{\gamma \cdot \epsilon}{3(2-\gamma)}$.

Now, size of samples required to satisfy the above approximation using Theorem 5.1 is,
$S_1 = \Omega\left( \frac{1}{\alpha_1^2 \cdot v} \left ( d \log \frac{1}{v} + \log \frac{1}{\delta} \right)\right) = \Omega\left( \frac{(2 - \gamma)^2}{\gamma^2 \cdot \epsilon^2} \left ( d  + \log \frac{1}{\delta} \right)\right) $, with probability $1 - \delta$ for all $q \in  Q_\gamma \subset Q$ . \\

%%% end of tp calculation...

%%% fn approximation and sample size ...

For $fn$ approximation, we can similarly show that 
$$\widetilde{fn} \in fn \left (\frac{S_2}{Y^+}\right) \pm S_2 \cdot (3\alpha_2).$$
Using $\epsilon = (3\alpha_2)$, we have the additive error to be $\epsilon S_2$. \\

% $$\forall q \in Q; d_v \left(\sum_{y \in Y^+}\frac{1}{Y^+} \delta_y, \frac{1}{|S_2|} \sum_{y \in S_2}\delta_y\right ) \le \alpha_2$$

% where, $\delta_y = 1(y \in fn)$ and $d_v(a, b) = \frac{|a - b|}{a+b+v}$

% \begin{align*}
%      &= \frac{\left | \sum_{y \in Y^+}\frac{1}{Y^+} \delta_y - \frac{1}{|S_2|} \sum_{y \in S_2}\delta_y \right |}{ \sum_{y \in Y^+}\frac{1}{Y^+} \delta_y + \frac{1}{|S_2|} \sum_{y \in S_2}\delta_y + v} \le \alpha_2\\
%      & = \frac{\left | \frac{fn}{Y^+} - \frac{\widetilde{fn}}{|S_2|} \right |}{ \frac{fn}{Y^+} + \frac{\widetilde{fn}}{|S_2|} + v} \le \alpha_2 \\
%      & = \left | \frac{fn}{Y^+} - \frac{\widetilde{fn}}{|S_2|} \right | \le 3 \alpha_2
% \end{align*}

% Since, $\frac{fn}{Y^+}$ and $\frac{\widetilde{fn}}{|S_2|}$ is less than one, lets take $v = \frac{1}{2}$.

% $$\widetilde{fn} \in fn \left (\frac{S_2}{Y^+}\right) \pm S_2 \cdot (3\alpha_2)$$ \\

Now, size of samples required to satisfy the above approximation using Theorem 5.1 is,
$S_2 = \Omega\left( \frac{1}{\alpha_2^2 \cdot v} \left ( d \log \frac{1}{v} + \log \frac{1}{\delta} \right)\right) = \Omega\left( \frac{1}{\epsilon^2} \left ( d  + \log \frac{1}{\delta} \right)\right) $, with probability $1 - \delta$ for all $q \in Q_\gamma \subset Q$ . \\

%%% end of fn calculation...

%%% fp approximation and sample size ...

For $fp$ approximation, 

$$\forall q \in Q; d_v \left(\sum_{z \in Y^-}\frac{1}{Y^-} \delta_z , \frac{1}{|S_3|} \sum_{z \in S_3}\delta_z \right ) \le \alpha_3$$

where, $\delta_z = 1(z \in fp)$ and $d_v(a, b) = \frac{|a - b|}{a+b+v}$. Again, using similar arguments, we can show that 
$$\widetilde{fp} \in fp \left (\frac{S_3}{Y^-}\right) \pm S_3 \cdot \epsilon,$$
where again $\epsilon \ge 3 \alpha_3$.
% \begin{align*}
%      &= \frac{\left | \sum_{z \in Y^-}\frac{1}{Y^-} \delta_z - \frac{1}{|S_3|} \sum_{z \in S_3}\delta_z \right |}{ \sum_{z \in Y^-}\frac{1}{Y^-} \delta_z + \frac{1}{|S_3|} \sum_{z \in S_3}\delta_z + v} \le \alpha_3\\
%      & = \frac{\left | \frac{fp}{Y^-} - \frac{\widetilde{fp}}{|S_3|} \right |}{ \frac{fp}{Y^-} + \frac{\widetilde{fp}}{|S_3|} + v} \le \alpha_3 \\
%      & = \left | \frac{fp}{Y^-} - \frac{\widetilde{fp}}{|S_3|} \right | \le 3 \alpha_3
% \end{align*}

% Since, $\frac{fp}{Y^-}$ and $\frac{\widetilde{fp}}{|S_3|}$ is less than one, lets take $v = \frac{1}{2}$.
% $$\widetilde{fp} \in fp \left (\frac{S_3}{Y^-}\right) \pm S_3 \cdot (3\alpha_3)$$

For $\widetilde{\widetilde{fp}}$, we get the following bound.

$$\widetilde{\widetilde{fp}} = \left (\frac{Y^-}{Y^+} \right) \widetilde{fp} \in fp \left (\frac{S_3}{Y^+}\right) \pm S_3 \cdot (3\alpha_3) \left (\frac{Y^-}{Y^+} \right) $$ \\

% Notice that this can be achieved by  reweighing all negative class data points by $ \left (\frac{Y^+}{Y^-} \right)$ and thus this term will also comes in the calculation of the size to satisfy $\widetilde{\widetilde{fp}}$. 

After applying reweighing our modified $\alpha_3$ is defined as, $\alpha_{3}^{'} = \frac{\epsilon}{3} \left (\frac{Y^+}{Y^-} \right)$. \\

Now, size of samples require to satisfy the above approximation using Theorem 5.1 is,
$S_3 = \Omega\left( \frac{1}{(\alpha_{3}^{'})^2 \cdot v} \left ( d \log \frac{1}{v} + \log \frac{1}{\delta} \right)\right) = \Omega\left( \left (\frac{Y^-}{Y^+} \right)^{2} \frac{1}{\epsilon^2} \left ( d  + \log \frac{1}{\delta} \right)\right) $, with probability $1 - \delta$ for all $q \in Q$ where $Q_\gamma \subset Q$ . \\

%%% end of fp calculation...

Thus, in-order to satisfy all three approximations viz. $tp$, $fn$ and $fp$ we would require total sample size, 

\begin{align*}
S &= S_1 + S_2 + S_3\\
%&=\frac{(2 - \gamma)^2}{\gamma^2 \cdot \epsilon^2} \left ( d  + \log \frac{1}{\delta} \right) +  \frac{1}{\epsilon^2} \left ( d  + \log \frac{1}{\delta} \right) +  \left (\frac{Y^-}{Y^+} \right)^{2} \frac{1}{\epsilon^2} \left ( d  + \log \frac{1}{\delta} \right)\\
& = \left( \frac{(2-\gamma)^2}{\gamma^2 \epsilon^2} + \frac{1}{\epsilon^2} + \left (\frac{Y^-}{Y^+} \right)^{2} \frac{1}{\epsilon^2} \right ) \cdot \left(d + \log\frac{1}{\delta} \right) 
\end{align*}

Thus, with probability $ 1- 3\delta$, above approximation for $tp$, $fn$ and $fp$ holds. 

Now, for the coreset left hand side guarantee,

\begin{align*}
    \tilde{F_1} & = \frac{\tilde{tp}}{\tilde {tp}+ \frac{1}{2} (\tilde{fn} + \widetilde{\widetilde{fp}})} \\
    &\ge \frac{(1 - \epsilon)  \frac{S \cdot tp}{Y^+}}{(1 - \epsilon) \frac{S \cdot tp}{Y^+} + \frac{1}{2} \left [\frac{S \cdot fn}{Y^+} + \frac{S \cdot fp}{Y^+} + (\epsilon \cdot S) \left( 1 + \frac{Y^-}{Y^+} \right) \right]} \\
    &= \frac{(1 - \epsilon) tp}{(1 - \epsilon) tp + \frac{1}{2} \left [fn + fp + (\epsilon) \left( Y^+ + Y^- \right) \right]}\\
    &= \frac{(1 - \epsilon) tp}{(1 - \epsilon) tp + \frac{1}{2} \left [fn + fp + (\epsilon \cdot n) \right]} \\
    &= \frac{(1 - \epsilon) tp}{ tp + \frac{1}{2} (fn + fp) - \left [\epsilon \cdot tp - \frac{\epsilon \cdot n}{2}\right]} 
\end{align*}

If the number of samples belongs to $tp$ is chosen to satisfy $tp \ge \frac{n(1 - c \cdot \epsilon)}{2c \cdot(1 - \epsilon)}$, then we have that 

\begin{align*}
\tilde{F_1} &\ge \frac{(1 - \epsilon) tp}{ tp + \frac{1}{2} (fn + fp) -\left [\epsilon \cdot tp - \frac{\epsilon \cdot n}{2}\right]} \\
&\ge \frac{(1 - c \cdot \epsilon) tp}{ tp + \frac{1}{2} (fn + fp)} = (1 - c \cdot \epsilon) F_1
\end{align*}

Similarly, for the coreset right hand side guarantee, 

\begin{align*}
     \tilde{F_1} & = \frac{\tilde{tp}}{\tilde {tp}+ \frac{1}{2} (\tilde{fn} + \widetilde{\widetilde{fp}})} \\
    &\le \frac{(1 + \epsilon)  \frac{S \cdot tp}{Y^+}}{(1 + \epsilon) \frac{S \cdot tp}{Y^+} + \frac{1}{2} \left [\frac{S \cdot fn}{Y^+} + \frac{S \cdot fp}{Y^+} - (\epsilon \cdot S) \left( 1 + \frac{Y^-}{Y^+} \right) \right]} \\
    &= \frac{(1 + \epsilon) tp}{(1 + \epsilon) tp + \frac{1}{2} \left [fn + fp - (\epsilon) \left( Y^+ + Y^- \right) \right]}\\
    &= \frac{(1 + \epsilon) tp}{(1 + \epsilon) tp + \frac{1}{2} \left [fn + fp - (\epsilon \cdot n) \right]} \\
    &= \frac{(1 + \epsilon) tp}{ tp + \frac{1}{2} (fn + fp) + \left [\epsilon \cdot tp - \frac{\epsilon \cdot n}{2}\right]}
\end{align*}

Again, using the similar assumptions on $tp$, we have that for $tp \ge \frac{n(1 + c \cdot \epsilon)}{2c \cdot(1 + \epsilon)}$, we get

\begin{align*}
\tilde{F_1} &\le \frac{(1 + \epsilon) tp}{ tp + \frac{1}{2} (fn + fp) + \left [\epsilon \cdot tp - \frac{\epsilon \cdot n}{2}\right]}\\
 &\le \frac{(1 + c \cdot \epsilon) tp}{ tp + \frac{1}{2} (fn + fp)} = (1 + c \cdot \epsilon)F_1
\end{align*}

\end{proof}

\section{Weak coreset for $MCC$}

\begin{theorem}
Let $\epsilon > 0$. Consider an instance where number of positive samples are $Y^+$ and number of negative samples are $Y^-$, and $n = Y^+ + Y^-$. Let $T$ to be the ground truth positive $0/1$ labels and $P$ to be the predicted positive $0/1$ labels. Let $tp, fp, fn$ be the true positive, false positive and false negative on the full data, $T^{'} =  \frac{\sum_{i} T_i}{n} = \frac{tp + fn}{n} = \frac{\left | Y^+  \right |}{n}$ and $P^{'} =  \frac{\sum_{i} P_i}{n} = \frac{tp + fp}{n}$. We consider $Q_{\gamma}$ to be the set of queries such that $tp \ge \gamma \cdot n$ and $tn \ge \gamma \cdot n$  for $q\in Q_{\gamma}$. Let $d=vc-dimension\left( Q_\gamma \right)$. We claim that uniform sampling with $\left(\frac{1}{\epsilon^2} \left ( d  + \log \frac{1}{\delta} \right) \left( 2 + 2 \cdot \left (\frac{Y^-}{Y^+} \right)^{2}\right) \right)$ samples would be able to give a coreset for $Q_{\gamma}$ that satisfies $\frac{MCC(q)}{\left  ( 1 + \frac{\epsilon}{\gamma}\right)} - 2 \cdot \epsilon \cdot C \le \widetilde{MCC}(q) \le \frac{MCC(q)}{\left  ( 1 - \frac{\epsilon}{\gamma}\right)} + 2 \cdot \epsilon \cdot C^{'}$ for all queries $q \in Q_{\gamma}$ with probability at least $1-4\delta$, where $C \le \frac{ \left( \frac{1}{\gamma} \right)}{ \left (1 + \frac{\epsilon}{\gamma} \right) \sqrt{T^{'} (1- T^{'}) \cdot \gamma}}$ and $C^{'} \le \frac{ \left( \frac{1}{\gamma} \right)}{ \left (1 - \frac{\epsilon}{\gamma} \right) \sqrt{T^{'} (1- T^{'}) \cdot \gamma}}$.

\end{theorem}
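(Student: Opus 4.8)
\emph{Proof strategy.} The plan is to reduce estimating $MCC(q)$ to estimating the four contingency-table cells $tp,fn,fp,tn$ individually via Theorem~\ref{Thm : Li}, and then to propagate the resulting errors through the $MCC$ formula. Concretely, I would sample, in a stratified manner, four independent subsamples: $S_1$ and $S_2$ uniformly from $Y^+$ (to estimate $tp$ and $fn$) and $S_3$ and $S_4$ uniformly from $Y^-$ (to estimate $fp$ and $tn$), and I would reweight the two $Y^-$-based estimates by $Y^-/Y^+$ exactly as in the proof of the $F_1$ theorem, so that all four cell estimates $\widehat{tp},\widehat{fn},\widehat{fp},\widehat{tn}$ carry an additive error on the common scale $\epsilon Y^+$. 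Note that under stratified sampling $T'=Y^+/n$ is known exactly, so in the form $MCC(q)=\bigl(tp/n-T'P'\bigr)\big/\sqrt{T'(1-T')\,P'(1-P')}$ only $tp/n$ and $P'=(tp+fp)/n$ remain to be estimated.

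For the sample-size accounting I would apply Theorem~\ref{Thm : Li} once to each of the four relevant indicator-function classes ($1(x\in tp)$ and $1(x\in fn)$ over $Y^+$; $1(z\in fp)$ and $1(z\in tn)$ over $Y^-$) with $v=1/2$, mirroring the $tp/fn/fp$ computations in the $F_1$ proof. Taking $3\alpha=\epsilon$ on each $Y^+$-stratum gives $S_1=S_2=\Omega\!\bigl(\tfrac{1}{\epsilon^2}(d+\log\tfrac1\delta)\bigr)$, and taking reweighted accuracy $\alpha'=\tfrac{\epsilon}{3}\tfrac{Y^+}{Y^-}$ on each $Y^-$-stratum gives $S_3=S_4=\Omega\!\bigl((\tfrac{Y^-}{Y^+})^2\tfrac{1}{\epsilon^2}(d+\log\tfrac1\delta)\bigr)$; summing and union-bounding over the four failure events yields the stated total sample size $\bigl(2+2(\tfrac{Y^-}{Y^+})^2\bigr)\tfrac{1}{\epsilon^2}(d+\log\tfrac1\delta)$ and failure probability $4\delta$. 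On the good event, $|\widehat{tp}-tp|,|\widehat{fn}-fn|,|\widehat{fp}-fp|,|\widehat{tn}-tn|\le\epsilon Y^+$.

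I would then convert these additive guarantees into the form required by $MCC$ using the definition of $Q_\gamma$. Because $tp\ge\gamma n\ge\gamma Y^+$ and $tn\ge\gamma n\ge\gamma Y^+$, the error $\epsilon Y^+$ is at most $(\epsilon/\gamma)tp$ and $(\epsilon/\gamma)tn$, so $\widehat{tp}\in(1\pm\epsilon/\gamma)tp$ and $\widehat{tn}\in(1\pm\epsilon/\gamma)tn$; in particular $\widehat{tp}/n\in(1\pm\epsilon/\gamma)(tp/n)$. For $\widehat{P'}:=(\widehat{tp}+\widehat{fp})/n$ I only obtain the additive bound $|\widehat{P'}-P'|\le 2\epsilon Y^+/n=2\epsilon T'$, but the same conditions force $P'=(tp+fp)/n\ge\gamma$ and $1-P'=(tn+fn)/n\ge\gamma$, so $P'\in[\gamma,1-\gamma]$ and, for $\epsilon$ small relative to $\gamma$, both $\widehat{P'}$ and $1-\widehat{P'}$ stay $\Omega(\gamma)$; hence $\sqrt{\widehat{P'}(1-\widehat{P'})}=\Omega(\sqrt\gamma)$, which is precisely why $\sqrt{T'(1-T')\gamma}$ appears in $C$ and $C'$.

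Finally I would bound the numerator and denominator of $\widetilde{MCC}(q)=\bigl(\widehat{tp}/n-T'\widehat{P'}\bigr)\big/\sqrt{T'(1-T')\,\widehat{P'}(1-\widehat{P'})}$ separately. The numerator equals $(tp/n-T'P')$ plus the multiplicative perturbation $\pm(\epsilon/\gamma)(tp/n)$ of the $tp/n$ term plus a term $\pm T'(\widehat{P'}-P')$ of magnitude at most $2\epsilon(T')^2\le 2\epsilon$; the denominator is the exact factor $\sqrt{T'(1-T')}$ times $\sqrt{\widehat{P'}(1-\widehat{P'})}$, and $|\widehat{P'}-P'|\le 2\epsilon T'$ with $P',1-P'\ge\gamma$ gives $\sqrt{\widehat{P'}(1-\widehat{P'})}\in(1\pm\epsilon/\gamma)\sqrt{P'(1-P')}$ after the standard manipulations (absorbing absolute constants into $\epsilon$). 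Moving this denominator factor across converts the leading term $MCC(q)$ into $MCC(q)/(1\pm\epsilon/\gamma)$ and carries the same $1/(1\pm\epsilon/\gamma)$ onto the additive term, while dividing the numerator's additive error (of order $\epsilon/\gamma$, using $tp/n\le T'\le 1$) by the denominator lower bound $\Omega(\sqrt{T'(1-T')\gamma})$ produces exactly the $2\epsilon C$ and $2\epsilon C'$ contributions with $C$ and $C'$ as stated; a case split on $\mathrm{sign}(MCC(q))$ (the regime of interest being $MCC(q)\ge0$) then assembles the claimed two-sided bound, and the four-fold union bound gives probability $1-4\delta$. I expect this last step to be the main obstacle: since $\widehat{P'}$ is only additively close to $P'$ and the $MCC$ denominator is a square root of a four-fold product, extracting a clean $1/(1\pm\epsilon/\gamma)$ factor (rather than a messier expression) requires careful use of $P',1-P'\ge\gamma$, and the asymmetry of the final bound forces one to track the sign of the numerator and the two error directions separately in order to land exactly on the stated $C$ and $C'$.
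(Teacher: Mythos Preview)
Your approach is essentially the paper's: apply Theorem~\ref{Thm : Li} to the four cell indicators with stratified sampling and $Y^-/Y^+$ reweighting to get additive control on $tp,fn,fp,tn$, then push the errors through the $MCC$ formula using $tp,tn\ge\gamma n$ (hence $tp+fp\ge\gamma n$ and $tn+fn\ge\gamma n$) to extract the $(1\pm\epsilon/\gamma)$ factors from the denominator and to convert the numerator's additive error into a $(2\epsilon/\gamma)tp$ term. The only cosmetic difference is that the paper works directly with the unnormalized form $\tfrac{tp-T'(tp+fp)}{\sqrt{T'(1-T')(tp+fp)(tn+fn)}}$ and bounds the two denominator factors $(tp+fp\pm\epsilon n)$ and $(tn+fn\pm\epsilon n)$ separately---this sidesteps your ``absorb constants into $\epsilon$'' maneuver on $\sqrt{\widehat{P'}(1-\widehat{P'})}$ and lands on the stated constants without a sign case split.
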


The proof can be found in the appendix.

%%%%%% Final combine table for all three dataset timing..

\begin{table*}[h]
  \centering
  \begin{tabular}{|c||c||c||c|}
    \hline
    \textbf{Coreset Algorithm} & \textbf{CoverType Time (in sec)} & \textbf{Adult Time (in sec)} & \textbf{KDD Cup '99 Time (in sec)} \\
    \hline
        uniform & 0.01418  & 0.003462 & 0.01028 \\ \hline
        leverage score &  0.1331 & 0.01132 & 0.0909\\ \hline
        lewis score & 2.4918 & 0.19029 & 1.9072 \\ \hline
        k-means& 0.05942 & 0.007611 & 0.0400 \\
    \hline
  \end{tabular}
  \caption{Time taken to prepare coreset of 10\% of full dataset for different datasets.}
  \label{table 1}
\end{table*}

%%%%% Final combine table ends here....

\begin{comment}
    
%%%% Table for covertype dataset timing.

\begin{table}
  \centering
  \caption{Time taken to prepare coreset of 10\% of CoverType dataset}
  \begin{tabular}{|c||c|}
    \hline
    \textbf{Coreset Algorithm} & \textbf{Time (in sec)} \\
    \hline
        uniform & 0.01418\\ \hline
        leverage score &  0.1331\\ \hline
        lewis score & 2.4918\\ \hline
        k-means& 0.05942\\
    \hline
  \end{tabular}
\end{table}

%%%% Table for Adult dataset timing.

\begin{table}
  \centering
  \caption{Time taken to prepare coreset of 10\% of Adult dataset}
  \begin{tabular}{|c||c|}
    \hline
    \textbf{Coreset Algorithm} & \textbf{Time (in sec)} \\
    \hline
    
        uniform & 0.003462  \\ \hline
        leverage score & 0.01132  \\ \hline
        lewis score& 0.19029 \\  \hline
        k-means& 0.007611 \\
        
    \hline
  \end{tabular}
\end{table}

%%%% Table for KDD Cup 99 dataset timing.

\begin{table}
  \centering
  \caption{Time taken to prepare coreset of 10\% of KDD Cup '99 dataset}
  \begin{tabular}{|c||c|}
    \hline
    \textbf{Coreset Algorithm} & \textbf{Time (in sec)} \\
    \hline
    
        uniform & 0.01028  \\  \hline
        leverage score & 0.0909  \\  \hline
        lewis score& 1.9072  \\  \hline
        k-means& 0.0400 \\
        
    \hline
  \end{tabular}
\end{table}
    
\end{comment}

%%%%% figure plot start here....

\begin{figure*}[h]
     \centering
     \begin{subfigure}[b]{0.33\textwidth}
         \centering
         \includegraphics[width= \textwidth, height=0.21\textheight]{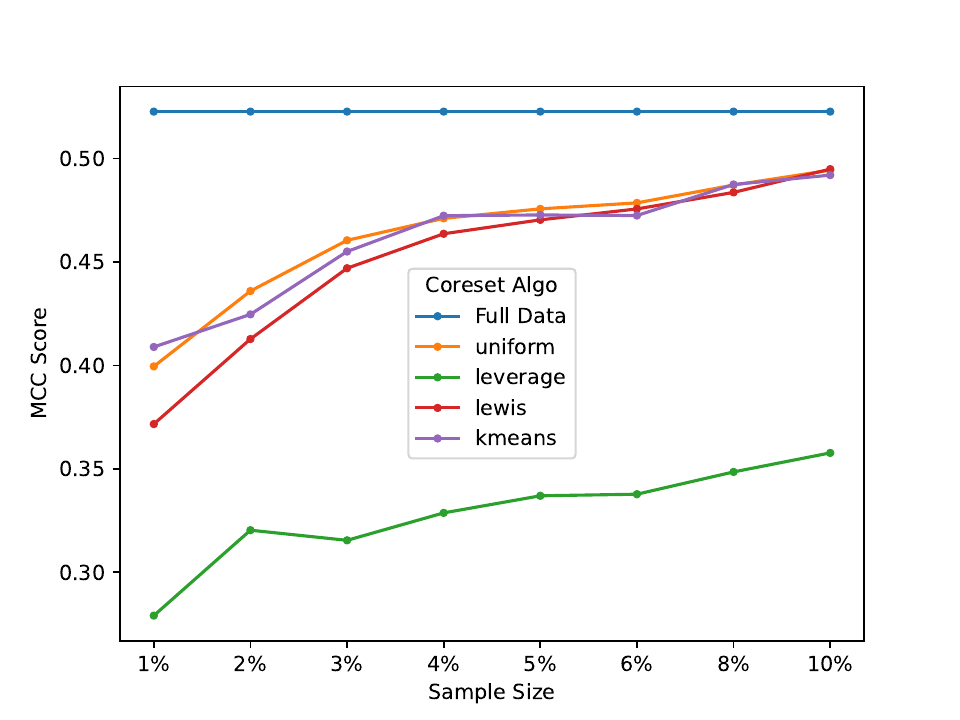}
         \caption{CovType Logistic}
         \label{fig:CovType Logistic MCC}
     \end{subfigure}
     \hfill
     \begin{subfigure}[b]{0.33\textwidth}
         \centering
         \includegraphics[width= \textwidth, height=0.21\textheight]{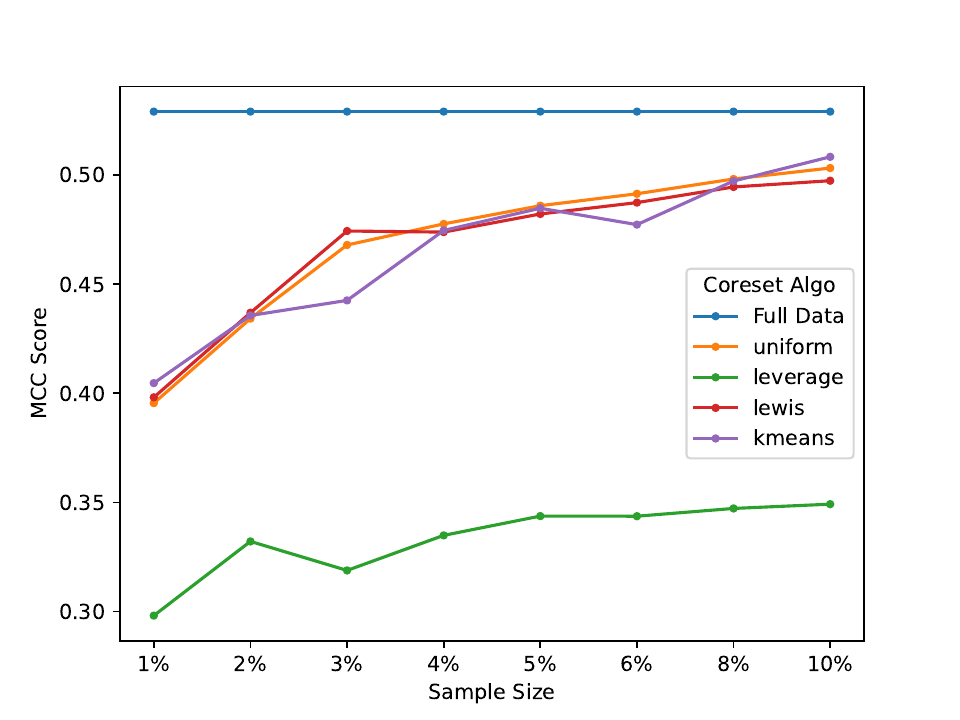}
         \caption{CovType SVM }
         \label{fig:CovType SVM MCC}
     \end{subfigure}
     \hfill
     \begin{subfigure}[b]{0.33\textwidth}
         \centering
         \includegraphics[width=\textwidth, height=0.21\textheight]{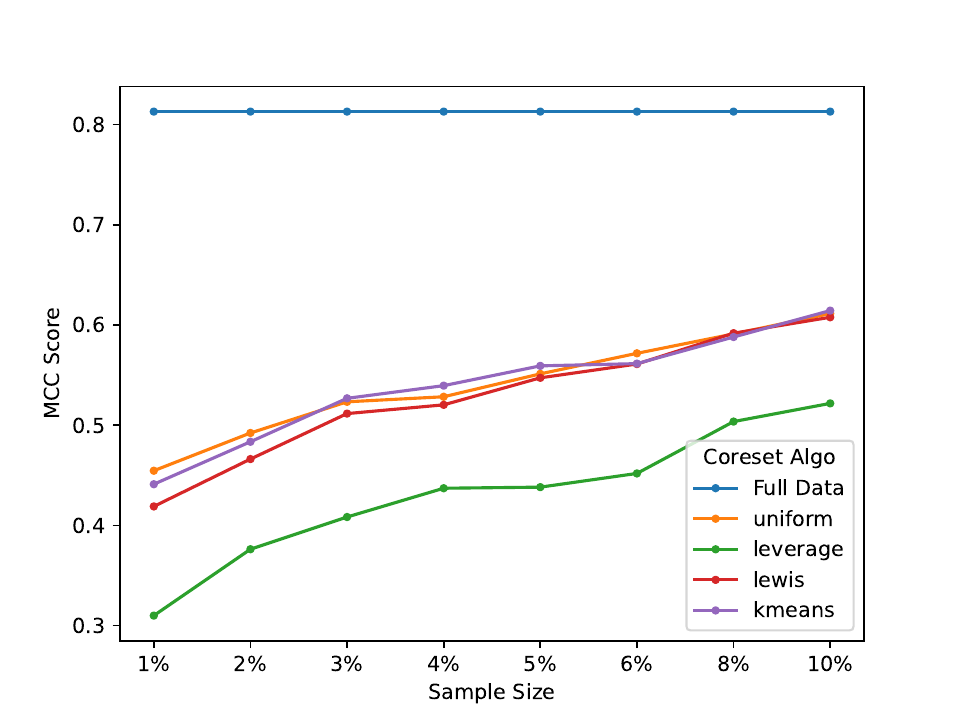}
         \caption{CovType MLP}
         \label{fig:CovType MLP MCC}
     \end{subfigure}
        \caption{$MCC$ Score on CovType}
        \label{fig:MCC Score on CovType}
\end{figure*}

\begin{figure*}[h]
     \centering
     \begin{subfigure}[b]{0.33\textwidth}
         \centering
         \includegraphics[width= \textwidth, height=0.25\textheight]{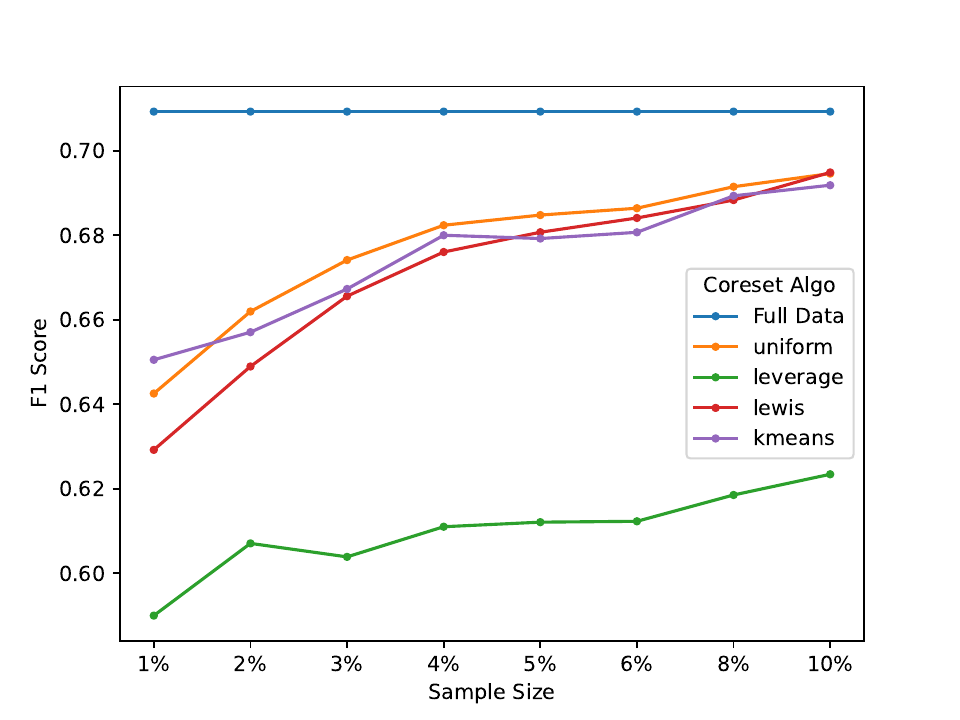}
         \caption{CovType Logistic}
         \label{fig:CovType Logistic F1}
     \end{subfigure}
     \hfill
     \begin{subfigure}[b]{0.33\textwidth}
         \centering
         \includegraphics[width=\textwidth, height=0.25\textheight]{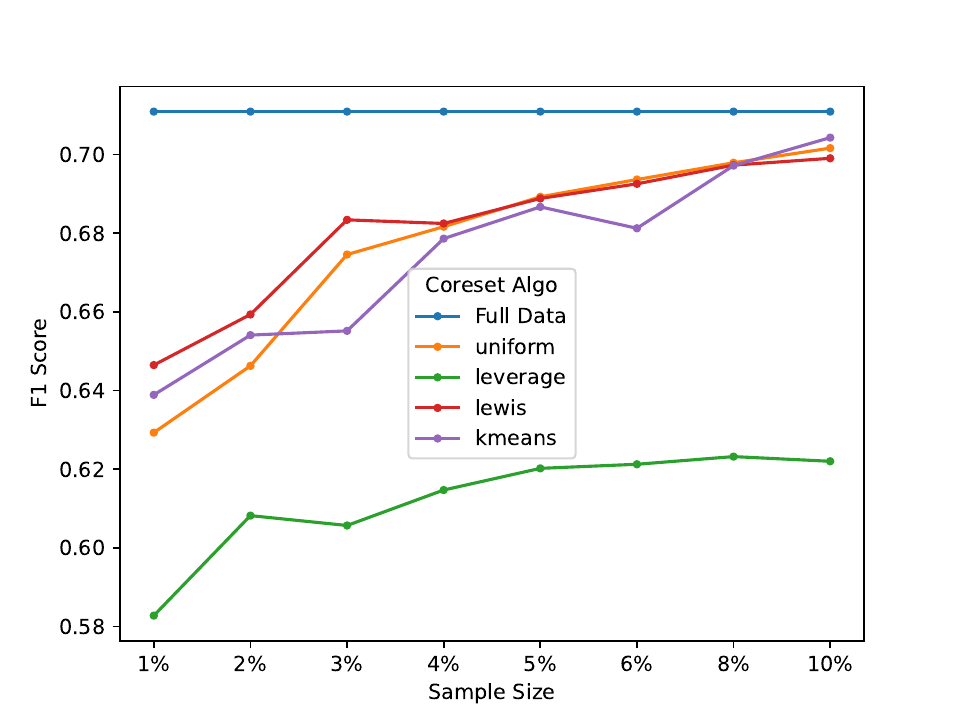}
         \caption{CovType SVM}
         \label{fig:CovType SVM F1}
     \end{subfigure}
     \hfill
     \begin{subfigure}[b]{0.33\textwidth}
         \centering
         \includegraphics[width=\textwidth, height=0.25\textheight]{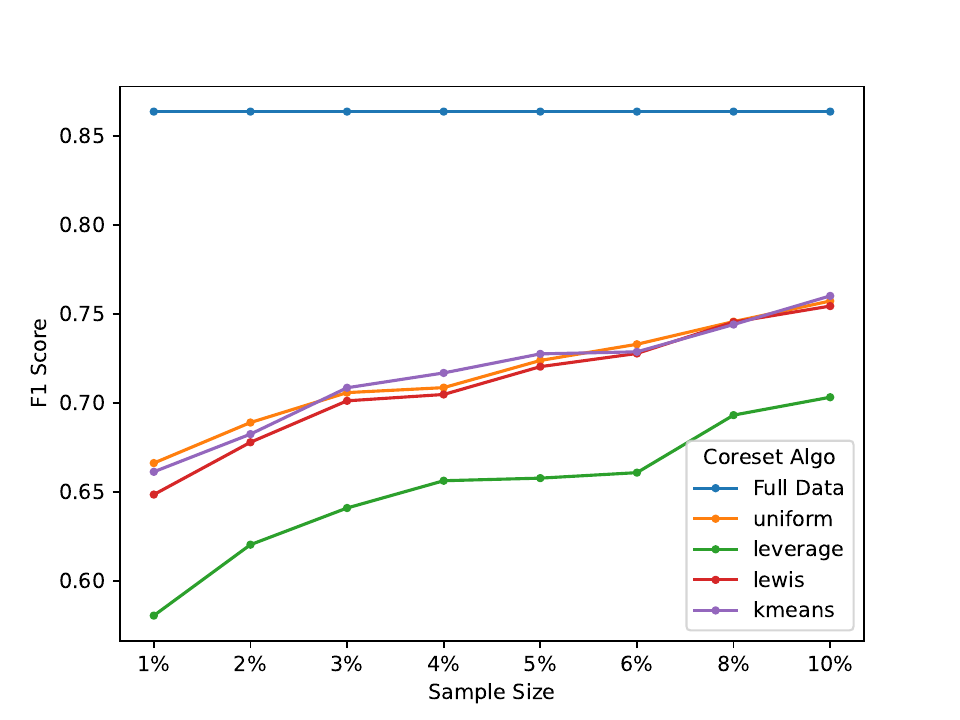}
         \caption{CovType MLP}
         \label{fig:CovType MLP F1}
     \end{subfigure}
        \caption{$F_1$ Score on CovType}
        \label{fig:F1 Score on CovType}
\end{figure*}

\begin{figure*}[h]
     \centering
     \begin{subfigure}[b]{0.33\textwidth}
         \centering
         \includegraphics[width=\textwidth, height=0.25\textheight]{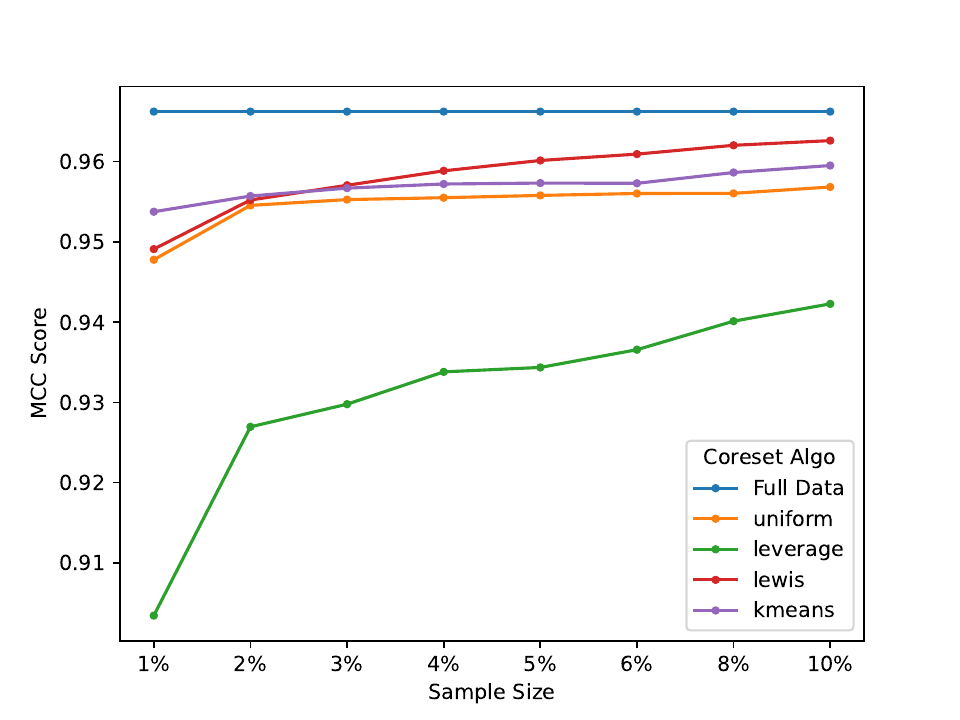}
         \caption{KDDCup Logistic}
         \label{fig:KDDCup Logistic MCC}
     \end{subfigure}
     \hfill
     \begin{subfigure}[b]{0.33\textwidth}
         \centering
         \includegraphics[width=\textwidth, height=0.25\textheight]{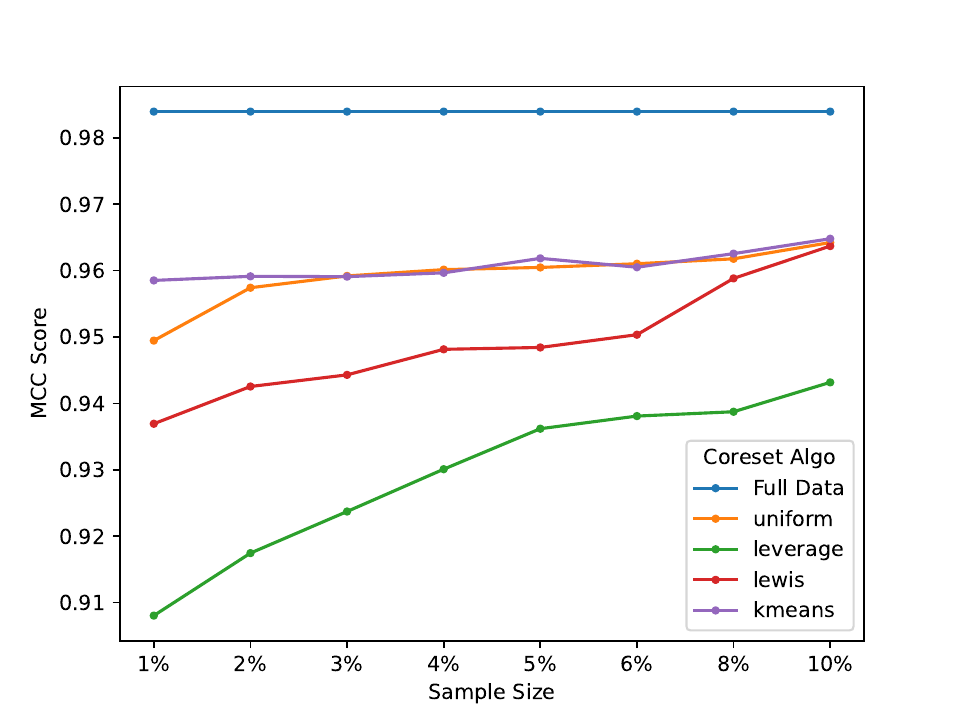}
         \caption{KDDCup SVM MCC}
         \label{fig:KDDCup SVM}
     \end{subfigure}
     \hfill
     \begin{subfigure}[b]{0.33\textwidth}
         \centering
         \includegraphics[width=\textwidth, height=0.25\textheight]{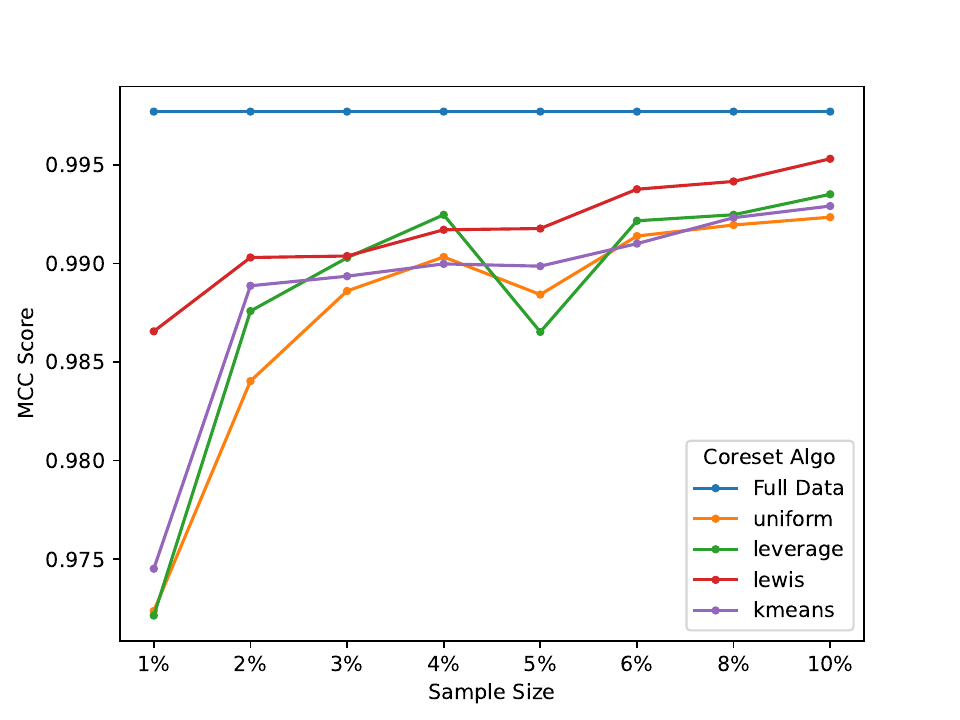}
         \caption{KDDCup MLP MCC}
         \label{fig:KDDCup MLP}
     \end{subfigure}
        \caption{$MCC$ Score on KDDCup}
        \label{fig:MCC Score on KDDCup}
\end{figure*}

\begin{figure*}[!h]
     \centering
     \begin{subfigure}[b]{0.33\textwidth}
         \centering
         \includegraphics[width=\textwidth, height=0.25\textheight]{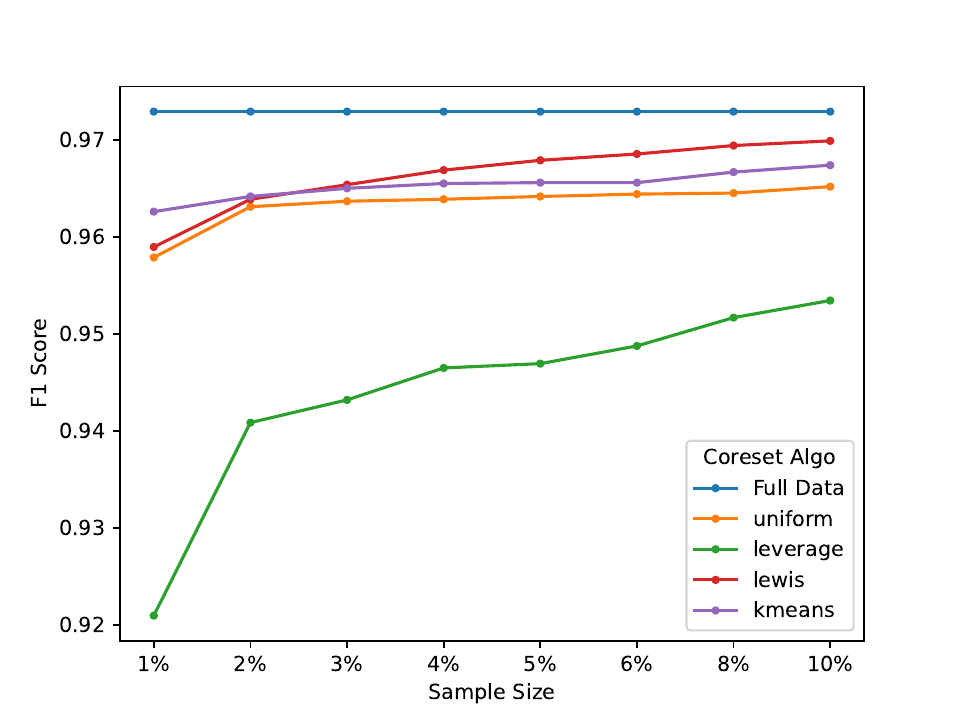}
         \caption{KDDCup Logistic}
         \label{fig:KDDCup Logistic F1}
     \end{subfigure}
     \hfill
     \begin{subfigure}[b]{0.33\textwidth}
         \centering
         \includegraphics[width=\textwidth, height=0.25\textheight]{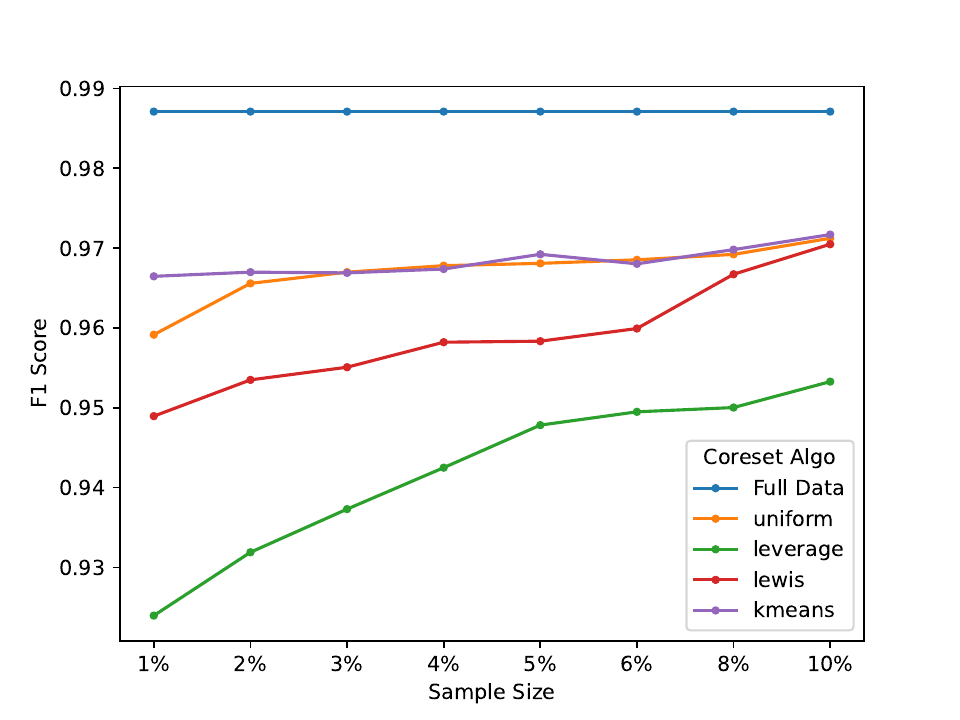}
         \caption{KDDCup SVM}
         \label{fig:KDDCup SVM F1}
     \end{subfigure}
     \hfill
     \begin{subfigure}[b]{0.33\textwidth}
         \centering
         \includegraphics[width=\textwidth, height=0.25\textheight]{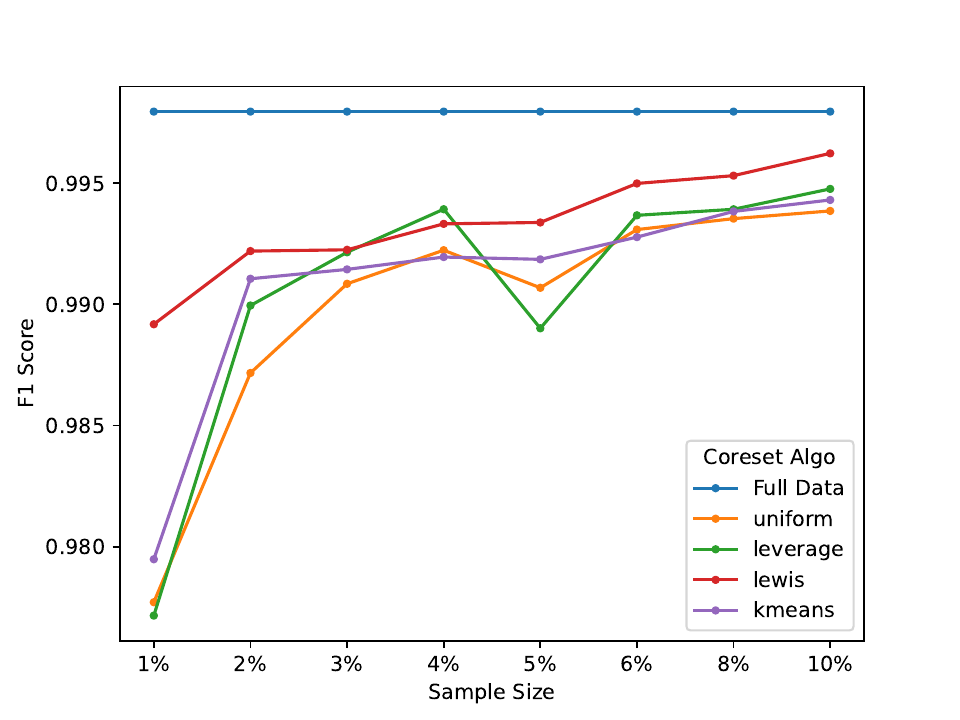}
         \caption{KDDCup MLP}
         \label{fig:KDDCup MLP F1}
     \end{subfigure}
        \caption{$F_1$ Score on KDDCup}
        \label{fig:F1 Score on KDDCup}
\end{figure*}

%%% figure plot ends here.....

\section{Experiments}

 All experiments were run on a computer with Nvidia Tesla V100 GPU with 32 GB memory and 28 CPUs. We used Python and its frameworks to implement our experiments. 

\paragraph{Data Sets:} The COVERTYPE \cite{misc_covertype_31} data consists of 581, 012 cartographic observations of different forests with 54 features. The task is to predict the type of trees at each location (49\% positive). We selected in a stratified way 50,000 samples from the real data and used them as our training data. The KDDCUP ’99 \cite{misc_kdd_cup_1999_data_130} data comprises of 494,021 network connections with 41 features, and the task is to detect network intrusions (20\% positive). In our experiments, we selected in a stratified way 50,000 samples from the real data and used them as training data. The Adult \cite{ misc_adult_2} dataset is a widely-used dataset containing information about individuals from the 1994 U.S. Census Bureau database. It consists of approximately 32,000 instances with 14 attributes, including age, education, and occupation. The dataset aims to predict whether an individual's income exceeds \$50,000 per year.

Our experiment created binary classification datasets by converting a multiclass dataset into a binary by flipping labels. Our final binary dataset class ratios for the CoverType dataset are [31770, 18230], for the Adult dataset [24294, 7706], and for the KDDCup dataset, it is [40154, 9846].  Here in $[a,b]$, $a$ is the number of points of positive class, and $b$ represents a number of points with negative class.

\paragraph{Experimental Assessment :} To verify our theoretical claims, we tested a uniform sampling coreset with some of the sophisticated coresets like leverage score \cite{drineas2006sampling}, $l_1$-lewis score \cite{cohen2015lp}, and k-means coreset by \cite{bachem2018scalable}.

%% model brief description...

As our classifier models, we used SVM classifier with linear kernel; vanilla logistic regression model from the sklearn library is used with default hyperparameters and a multilayer perceptron(MLP). For MLP experiments, we considered a simple MLP classifier with two hidden layers of size 100 each and the final output layer of size two, as we are dealing with binary classification. The optimizer used for the MLP is Adam, and the activation function used is ReLU.

%%% model description ends here....

We first prepared different coresets from the full datasets. Then we train our classifier models: logistic regression, SVM, and Feed Forward Neural Network on our coreset as well as on the full dataset.

For testing the performance of our coresets, we took models trained using the coresets and tested them on the entire training dataset. We report the evaluation measures (F1 and MCC) obtained and compare it with the original value obtained on full data(using model trained on full data). For all our experiments, plotted values are means taken over five independent repetitions of each experiment.

Figures \ref{fig:MCC Score on CovType} through \ref{fig:F1 Score on KDDCup} clearly show that uniform sampling gives superior or comparable performance to other sophisticated methods for  both F1 score and MCC. Also it can be seen that with increasing coreset size, the performance of model trained on the coreset also improves as expected. We also measure the time required to prepare coreset using these techniques on the different datasets. The times are reported in Table \ref{table 1}. It is clear that uniform sampling is many times faster than the other methods. Hence at much lower computation times we get better or comparable performances to other coreset construction strategies. Some additional experiments can be found in the appendix. 

\section{Conclusion}
We initiated the study of coresets for non-decomposable classification measures, specifically for the F1 score and MCC. We showed lower bounds for strong coresets and construction of weak coresets using stratified uniform sampling.  It would be interesting to see whether coresets with better additive guarantees and lesser assumptions on the query vector can be developed. Similarly, algorithm-specific subset selection strategies could be explored for more efficiency. The question of tackling other measures, e.g., AUC-ROC, also remains open.

\section{Acknowledgments}
Anirban Dasgupta would like to acknowledge the support received from Google, DST(SERB) and Cisco as well as the N Rama Rao Chair position at IIT Gandhinagar. Jayesh Malaviya and Rachit Chhaya would like to acknowledge the support from IIT- Gandhinagar and Dhirubhai Ambani Institute of Information and Communication Technology (DA-IICT) , Gandhinagar, India.
%\newpage
%\pagebreak

%\bibliographystyle{plainnat}
\bibliography{aaai24}

\begin{thebibliography}{28}
\providecommand{\natexlab}[1]{#1}

\bibitem[{Agarwal et~al.(2005)Agarwal, Har-Peled, Varadarajan
  et~al.}]{agarwal2005geometric}
Agarwal, P.~K.; Har-Peled, S.; Varadarajan, K.~R.; et~al. 2005.
\newblock Geometric approximation via coresets.
\newblock \emph{Combinatorial and computational geometry}, 52(1): 1--30.

\bibitem[{Bachem, Lucic, and Krause(2017)}]{bachem2017practical}
Bachem, O.; Lucic, M.; and Krause, A. 2017.
\newblock Practical coreset constructions for machine learning.
\newblock \emph{arXiv preprint arXiv:1703.06476}.

\bibitem[{Bachem, Lucic, and Krause(2018)}]{bachem2018scalable}
Bachem, O.; Lucic, M.; and Krause, A. 2018.
\newblock Scalable k-means clustering via lightweight coresets.
\newblock In \emph{Proceedings of the 24th ACM SIGKDD International Conference
  on Knowledge Discovery \& Data Mining}, 1119--1127.

\bibitem[{Becker and Kohavi(1996)}]{misc_adult_2}
Becker, B.; and Kohavi, R. 1996.
\newblock {Adult}.
\newblock UCI Machine Learning Repository.
\newblock {DOI}: https://doi.org/10.24432/C5XW20.

\bibitem[{B{\'e}n{\'e}dict et~al.(2022)B{\'e}n{\'e}dict, Koops, Odijk, and
  de~Rijke}]{benedict2022sigmoidf1}
B{\'e}n{\'e}dict, G.; Koops, H.~V.; Odijk, D.; and de~Rijke, M. 2022.
\newblock sigmoidF1: A Smooth F1 Score Surrogate Loss for Multilabel
  Classification.
\newblock \emph{Transactions on Machine Learning Research}.

\bibitem[{Blackard(1998)}]{misc_covertype_31}
Blackard, J. 1998.
\newblock {Covertype}.
\newblock UCI Machine Learning Repository.
\newblock {DOI}: https://doi.org/10.24432/C50K5N.

\bibitem[{Braverman et~al.(2022)Braverman, Cohen-Addad, Jiang, Krauthgamer,
  Schwiegelshohn, Toftrup, and Wu}]{braverman2022power}
Braverman, V.; Cohen-Addad, V.; Jiang, H.-C.~S.; Krauthgamer, R.;
  Schwiegelshohn, C.; Toftrup, M.~B.; and Wu, X. 2022.
\newblock The power of uniform sampling for coresets.
\newblock In \emph{2022 IEEE 63rd Annual Symposium on Foundations of Computer
  Science (FOCS)}, 462--473. IEEE.

\bibitem[{Braverman et~al.(2016)Braverman, Feldman, Lang, Statman, and
  Zhou}]{braverman2016new}
Braverman, V.; Feldman, D.; Lang, H.; Statman, A.; and Zhou, S. 2016.
\newblock New frameworks for offline and streaming coreset constructions.
\newblock \emph{arXiv preprint arXiv:1612.00889}.

\bibitem[{Cohen and Peng(2015)}]{cohen2015lp}
Cohen, M.~B.; and Peng, R. 2015.
\newblock Lp row sampling by lewis weights.
\newblock In \emph{Proceedings of the forty-seventh annual ACM symposium on
  Theory of computing}, 183--192.

\bibitem[{Drineas, Mahoney, and Muthukrishnan(2006)}]{drineas2006sampling}
Drineas, P.; Mahoney, M.~W.; and Muthukrishnan, S. 2006.
\newblock Sampling algorithms for l 2 regression and applications.
\newblock In \emph{Proceedings of the seventeenth annual ACM-SIAM symposium on
  Discrete algorithm}, 1127--1136.

\bibitem[{Eban et~al.(2017)Eban, Schain, Mackey, Gordon, Rifkin, and
  Elidan}]{Google_scalable_nondecomposable}
Eban, E.; Schain, M.; Mackey, A.; Gordon, A.; Rifkin, R.; and Elidan, G. 2017.
\newblock Scalable learning of non-decomposable objectives.
\newblock In \emph{Artificial intelligence and statistics}, 832--840. PMLR.

\bibitem[{Feldman(2020)}]{feldman2020core}
Feldman, D. 2020.
\newblock Core-sets: Updated survey.
\newblock \emph{Sampling Techniques for Supervised or Unsupervised Tasks},
  23--44.

\bibitem[{Feldman and Langberg(2011)}]{feldman2011unified}
Feldman, D.; and Langberg, M. 2011.
\newblock A unified framework for approximating and clustering data.
\newblock In \emph{Proceedings of the forty-third annual ACM symposium on
  Theory of computing}, 569--578.

\bibitem[{Joachims(2005)}]{joachims2005support}
Joachims, T. 2005.
\newblock A support vector method for multivariate performance measures.
\newblock In \emph{Proceedings of the 22nd international conference on Machine
  learning}, 377--384.

\bibitem[{Kar, Narasimhan, and Jain(2014)}]{online_sgd_nondecomposable}
Kar, P.; Narasimhan, H.; and Jain, P. 2014.
\newblock Online and stochastic gradient methods for non-decomposable loss
  functions.
\newblock \emph{Advances in Neural Information Processing Systems}, 27.

\bibitem[{Langberg and Schulman(2010)}]{langberg2010universal}
Langberg, M.; and Schulman, L.~J. 2010.
\newblock Universal $\varepsilon$-approximators for integrals.
\newblock In \emph{Proceedings of the twenty-first annual ACM-SIAM symposium on
  Discrete Algorithms}, 598--607. SIAM.

\bibitem[{Li, Long, and Srinivasan(2001)}]{li2001improved}
Li, Y.; Long, P.~M.; and Srinivasan, A. 2001.
\newblock Improved bounds on the sample complexity of learning.
\newblock \emph{Journal of Computer and System Sciences}, 62(3): 516--527.

\bibitem[{Lu, Raff, and Holt(2023)}]{lu2023coreset}
Lu, F.; Raff, E.; and Holt, J. 2023.
\newblock A Coreset Learning Reality Check.
\newblock \emph{arXiv preprint arXiv:2301.06163}.

\bibitem[{Mai, Musco, and Rao(2021)}]{classification_coreset}
Mai, T.; Musco, C.; and Rao, A. 2021.
\newblock Coresets for classification--simplified and strengthened.
\newblock \emph{Advances in Neural Information Processing Systems}, 34:
  11643--11654.

\bibitem[{Munteanu et~al.(2018)Munteanu, Schwiegelshohn, Sohler, and
  Woodruff}]{on_coreset_logistic}
Munteanu, A.; Schwiegelshohn, C.; Sohler, C.; and Woodruff, D. 2018.
\newblock On coresets for logistic regression.
\newblock \emph{Advances in Neural Information Processing Systems}, 31.

\bibitem[{Nan et~al.(2012)Nan, Chai, Lee, and Chieu}]{nan2012optimizing}
Nan, Y.; Chai, K.~M.; Lee, W.~S.; and Chieu, H.~L. 2012.
\newblock Optimizing F-measure: A tale of two approaches.
\newblock \emph{arXiv preprint arXiv:1206.4625}.

\bibitem[{Narasimhan, Kar, and Jain(2015)}]{narasimhan2015optimizing}
Narasimhan, H.; Kar, P.; and Jain, P. 2015.
\newblock Optimizing non-decomposable performance measures: A tale of two
  classes.
\newblock In \emph{International Conference on Machine Learning}, 199--208.
  PMLR.

\bibitem[{Poms et~al.(2021)Poms, Sarukkai, Mullapudi, Sohoni, Mark, Ramanan,
  and Fatahalian}]{poms2021low}
Poms, F.; Sarukkai, V.; Mullapudi, R.~T.; Sohoni, N.~S.; Mark, W.~R.; Ramanan,
  D.; and Fatahalian, K. 2021.
\newblock Low-shot validation: Active importance sampling for estimating
  classifier performance on rare categories.
\newblock In \emph{Proceedings of the IEEE/CVF International Conference on
  Computer Vision}, 10705--10714.

\bibitem[{Samadian et~al.(2020)Samadian, Pruhs, Moseley, Im, and
  Curtin}]{samadian2020unconditional}
Samadian, A.; Pruhs, K.; Moseley, B.; Im, S.; and Curtin, R. 2020.
\newblock Unconditional coresets for regularized loss minimization.
\newblock In \emph{International Conference on Artificial Intelligence and
  Statistics}, 482--492. PMLR.

\bibitem[{Sanyal et~al.(2018)Sanyal, Kumar, Kar, Chawla, and
  Sebastiani}]{opt_nondecompp_deep_network}
Sanyal, A.; Kumar, P.; Kar, P.; Chawla, S.; and Sebastiani, F. 2018.
\newblock Optimizing non-decomposable measures with deep networks.
\newblock \emph{Machine Learning}, 107: 1597--1620.

\bibitem[{Sawade, Landwehr, and Scheffer(2010)}]{sawade2010active}
Sawade, C.; Landwehr, N.; and Scheffer, T. 2010.
\newblock Active estimation of f-measures.
\newblock \emph{Advances in Neural Information Processing Systems}, 23.

\bibitem[{Stolfo et~al.(1999)Stolfo, Fan, Lee, Prodromidis, and
  Chan}]{misc_kdd_cup_1999_data_130}
Stolfo, S.; Fan, W.; Lee, W.; Prodromidis, A.; and Chan, P. 1999.
\newblock {KDD Cup 1999 Data}.
\newblock UCI Machine Learning Repository.
\newblock {DOI}: https://doi.org/10.24432/C51C7N.

\bibitem[{Tukan et~al.(2021)Tukan, Baykal, Feldman, and Rus}]{on_coreset_svm}
Tukan, M.; Baykal, C.; Feldman, D.; and Rus, D. 2021.
\newblock On coresets for support vector machines.
\newblock \emph{Theoretical Computer Science}, 890: 171--191.

\end{thebibliography}

\newpage

\appendix

%% MCC Proof starts from here...

\section{Weak coreset for $MCC$}

\begin{theorem}
Let $\epsilon > 0$. Consider an instance where number of positive samples are $Y^+$ and number of negative samples are $Y^-$, and $n = Y^+ + Y^-$. Let $T$ to be the ground truth positive $0/1$ labels and $P$ to be the predicted positive $0/1$ labels. Let $tp, fp, fn$ be the true positive, false positive and false negative on the full data, $T^{'} =  \frac{\sum_{i} T_i}{n} = \frac{tp + fn}{n} = \frac{\left | Y^+  \right |}{n}$ and $P^{'} =  \frac{\sum_{i} P_i}{n} = \frac{tp + fp}{n}$. We consider $Q_{\gamma}$ to be the set of queries such that $tp \ge \gamma \cdot n$ and $tn \ge \gamma \cdot n$  for $q\in Q_{\gamma}$. Let $d=vc-dimension\left( Q_\gamma \right)$. We claim that uniform sampling with $\left(\frac{1}{\epsilon^2} \left ( d  + \log \frac{1}{\delta} \right) \left( 2 + 2 \cdot \left (\frac{Y^-}{Y^+} \right)^{2}\right) \right)$ samples would be able to give a coreset for $Q_{\gamma}$ that satisfies $\frac{MCC(q)}{\left  ( 1 + \frac{\epsilon}{\gamma}\right)} - 2 \cdot \epsilon \cdot C \le \widetilde{MCC}(q) \le \frac{MCC(q)}{\left  ( 1 - \frac{\epsilon}{\gamma}\right)} + 2 \cdot \epsilon \cdot C^{'}$ for all query $q \in Q_{\gamma}$ with probability $1-4\delta$, where $C \le \frac{ \left( \frac{1}{\gamma} \right)}{ \left (1 + \frac{\epsilon}{\gamma} \right) \sqrt{T^{'} (1- T^{'}) \cdot \gamma}}$ and $C^{'} \le \frac{ \left( \frac{1}{\gamma} \right)}{ \left (1 - \frac{\epsilon}{\gamma} \right) \sqrt{T^{'} (1- T^{'}) \cdot \gamma}}$.

\end{theorem}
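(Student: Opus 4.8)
The plan is to follow the template of the $F_1$ proof almost verbatim. I would draw $S_1$ points uniformly at random from $Y^+$ and $S_3$ points uniformly at random from $Y^-$ (stratified uniform sampling), reweight each sampled negative by $Y^-/Y^+$ exactly as in the $F_1$ argument, and estimate each of the four contingency-table cells by one application of Theorem~\ref{Thm : Li}, taking the function class to be the $\{0,1\}$-valued indicators $x\mapsto\mathbf{1}(q\text{ labels }x\text{ positive})$ restricted to the relevant stratum (whose pseudo-dimension equals the VC dimension $d$ of $Q_\gamma$) and $v=\tfrac12$. The observation that makes $MCC$ no harder than $F_1$ is that under stratified sampling $T'=Y^+/n$ and $1-T'=Y^-/n$ are known exactly, so only $tp/n$ and $P'$ --- equivalently $\widehat{tp}$ and $\widehat{fp}$ --- actually need to be estimated. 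Running Theorem~\ref{Thm : Li} on each stratum and using its $d_v$ pseudo-distance gives, with probability $1-4\delta$ after a union bound over the four events, multiplicative estimates $\widehat{tp}\in(1\pm\tfrac{\epsilon}{\gamma})tp$ and $\widehat{tn}\in(1\pm\tfrac{\epsilon}{\gamma})tn$ --- the additive error $\pm\epsilon Y^+$ (resp.\ $\pm\epsilon Y^-$) produced by the lemma is upgraded to the factor $1\pm\epsilon/\gamma$ using precisely the defining hypotheses $tp\ge\gamma n$, $tn\ge\gamma n$, since then $Y^+\le n\le tp/\gamma$ and $Y^-\le n\le tn/\gamma$ --- together with the purely additive estimates $\widehat{fp}\in fp\pm\epsilon Y^-$ and $\widehat{fn}\in fn\pm\epsilon Y^+$. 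The sample sizes are $S_1=S_2=\Theta\!\big(\tfrac{1}{\epsilon^2}(d+\log\tfrac1\delta)\big)$ on the positive stratum and $S_3=S_4=\Theta\!\big((\tfrac{Y^-}{Y^+})^2\tfrac{1}{\epsilon^2}(d+\log\tfrac1\delta)\big)$ on the reweighted negative stratum (the $(Y^-/Y^+)^2$ being the cost of the reweighting, as in the $F_1$ proof), which sum to the claimed total.

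Next I would perturb the numerator and denominator of $MCC=\dfrac{tp/n-T'P'}{\sqrt{T'(1-T')\,P'(1-P')}}$ separately. Using $tp+fn=Y^+$ and $fp+tn=Y^-$ one has $\tfrac{tp}{n}-T'P'=(1-T')\tfrac{tp}{n}-T'\tfrac{fp}{n}$, so the coreset numerator $\widehat N:=(1-T')\tfrac{\widehat{tp}}{n}-T'\tfrac{\widehat{fp}}{n}$ satisfies $|\widehat N-N|\le(1-T')\tfrac{\epsilon}{\gamma}\tfrac{tp}{n}+T'\epsilon(1-T')\le\tfrac{2\epsilon}{\gamma}$, i.e.\ a clean additive perturbation $\widehat N\in N\pm\Delta_N$ with $\Delta_N\le 2\epsilon/\gamma$. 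For the denominator the point is that the hypotheses force $P'=\tfrac{tp+fp}{n}\ge\tfrac{tp}{n}\ge\gamma$ and $1-P'=\tfrac{tn+fn}{n}\ge\tfrac{tn}{n}\ge\gamma$, so $P'$ is bounded away from both $0$ and $1$; feeding the four cell estimates through $\widehat{P'}=(\widehat{tp}+\widehat{fp})/n$ and its analogue $(\widehat{fn}+\widehat{tn})/n$ for $1-P'$, and dividing the additive cell errors by these $\ge\gamma$ quantities, shows both are within a factor $1\pm O(\epsilon/\gamma)$ of the truth, hence the coreset denominator obeys $\widehat D\in(1\pm O(\epsilon/\gamma))D$ with $D=\sqrt{T'(1-T')P'(1-P')}\ge\sqrt{T'(1-T')\,\gamma}$ (the constant in front of $\epsilon/\gamma$ being absorbed by rescaling $\epsilon$).

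Then I would combine. Writing $\widetilde{MCC}=\widehat N/\widehat D$, the upper bound follows by dividing by the smallest admissible $\widehat D$: $\widetilde{MCC}\le\dfrac{N+\Delta_N}{(1-\epsilon/\gamma)D}=\dfrac{MCC}{1-\epsilon/\gamma}+\dfrac{\Delta_N}{(1-\epsilon/\gamma)D}$, and plugging $\Delta_N\le 2\epsilon/\gamma$ and $D\ge\sqrt{T'(1-T')\gamma}$ gives exactly the additive term $2\epsilon C'$ with $C'\le\dfrac{1/\gamma}{(1-\epsilon/\gamma)\sqrt{T'(1-T')\gamma}}$. The lower bound is the mirror image: dividing by the largest admissible $\widehat D\le(1+\epsilon/\gamma)D$ gives $\widetilde{MCC}\ge\dfrac{MCC}{1+\epsilon/\gamma}-2\epsilon C$ with $C\le\dfrac{1/\gamma}{(1+\epsilon/\gamma)\sqrt{T'(1-T')\gamma}}$.

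The step I expect to need real care, rather than bookkeeping, is the denominator: turning the additive per-cell errors into a clean multiplicative statement for $\sqrt{P'(1-P')}$ uses both that $P'$ and $1-P'$ stay bounded away from $0$ and $1$ (this is exactly where $tp,tn\ge\gamma n$ is used) and a careful tally of how the $Y^-/Y^+$ reweighting inflates the negative-stratum error. One must also handle the sign of $MCC$: the divisions above implicitly assume $N\ge 0$, and since $Q_\gamma$ can contain classifiers with $MCC<0$, a short case split on the signs of $\widehat N$ and of $N+\Delta_N$ is needed, after which the additive term $2\epsilon C$ (which dominates $|MCC|\cdot O(\epsilon/\gamma)$ in the relevant range of $\gamma$) absorbs the slack. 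Everything else --- the four invocations of Theorem~\ref{Thm : Li}, the union bound, and the algebra converting each $\alpha_i$ into a sample size --- is a direct transcription of the $F_1$ proof.
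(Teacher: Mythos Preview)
Your plan is essentially the paper's own argument: stratified uniform sampling from $Y^{+}$ and $Y^{-}$, reweighting negatives by $Y^{-}/Y^{+}$, four applications of Theorem~\ref{Thm : Li} (one per contingency cell) with $v=\tfrac12$ and a union bound to probability $1-4\delta$, and then using $tp,tn\ge\gamma n$ to turn the additive $\pm\epsilon n$ errors in $tp+fp$ and $tn+fn$ into the multiplicative $(1\pm\epsilon/\gamma)$ factors on the denominator. The only stylistic difference is that the paper keeps everything in unnormalised form and substitutes the four cell estimates directly into
\[
\widetilde{MCC}=\frac{\widetilde{tp}-T'(\widetilde{tp}+\widetilde{\widetilde{fp}})}{\sqrt{T'(1-T')(\widetilde{tp}+\widetilde{\widetilde{fp}})(\widetilde{\widetilde{tn}}+\widetilde{fn})}},
\]
simplifying line by line, whereas you abstract out $\widehat N/\widehat D$; and the paper does not discuss the sign of $MCC$, which you (correctly) flag as needing a short case split. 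One small thing to watch: by bounding $\Delta_N\le 2\epsilon/\gamma$ and $D\ge\sqrt{T'(1-T')\gamma}$ separately you lose the cancellation that the paper exploits, namely that the numerator's additive slack is really $\propto tp$ and combines with the factor $tp+fp\ge tp$ inside the square root to give $tp/\sqrt{(tp+fp)(tn+fn)}\le 1/\sqrt{\gamma}$ directly---if you simply bound $P'(1-P')\ge\gamma(1-\gamma)$ you will be off from the stated $C,C'$ by a harmless $\sqrt{1-\gamma}$, so to hit the exact constants in the theorem keep the $tp$ dependence in $\Delta_N$ one step longer before dividing.
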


\begin{proof}

We consider MCC(matthews correlation coefficient) is defined in the follwing form,

\begin{align*}
MCC & = \frac{\frac{tp}{n} - T^{'}P^{'}}{\sqrt{T^{'} P^{'} (1- T^{'}) (1- P^{'})}} \\
& = \frac{\frac{tp}{n} - T^{'}\left( \frac{tp + fp}{n}\right)}{\sqrt{T^{'} (1- T^{'}) \left( \frac{tp + fp}{n}\right) \left( 1 - \left( \frac{tp + fp}{n}\right)\right)}}\\
& = \frac{tp - T^{'}\left( tp + fp\right)}{\sqrt{T^{'} (1- T^{'}) \left( tp + fp \right) \left( tn + fn\right)}}
\end{align*}

Now, we would require approximation for all four quantity viz. $tp$, $tn$, $fp$ and $fn$ for all query, and for that we would use Theorem 5.1,

%%%  tp approximation and sample size ...
For $tp$ approximation,
$$\forall q \in Q; d_v \left(\sum_{x \in Y^+}\frac{1}{Y^+} \delta_x, \frac{1}{|S_1|} \sum_{x \in S_1}\delta_x\right ) \le \alpha_1$$

where, $\delta_x = 1(x \in tp)$ and $d_v(a, b) = \frac{|a - b|}{a+b+v}$

\begin{align*}
     &= \frac{\left | \sum_{x \in Y^+}\frac{1}{Y^+} \delta_x - \frac{1}{|S_1|} \sum_{x \in S_1}\delta_x \right |}{ \sum_{x \in Y^+}\frac{1}{Y^+} \delta_x + \frac{1}{|S_1|} \sum_{x \in S_1}\delta_x + v} \le \alpha_1\\
     & = \frac{\left | \frac{tp}{Y^+} - \frac{\widetilde{tp}}{|S_1|} \right |}{ \frac{tp}{Y^+} + \frac{\widetilde{tp}}{|S_1|} + v} \le \alpha_1 \\
     & = \left | \frac{tp}{Y^+} - \frac{\widetilde{tp}}{|S_1|} \right | \le 3 \alpha_1
\end{align*}

Since, $\frac{tp}{Y^+}$ and $\frac{\widetilde{tp}}{|S_1|}$ is less than one, lets take $v = \frac{1}{2}$.

$$\widetilde{tp} \in tp \left (\frac{S_1}{Y^+}\right) \pm S_1 \cdot (3\alpha_1)$$ \\

where, $\epsilon = (3\alpha_1)$ for above additive error epsilon, and therefore, $\alpha_1 = \frac{\epsilon}{3}$ \\

Now, size of samples require to satisfy the above approximation using Theorem 5.1 is,
$S_1 = \Omega\left( \frac{1}{\alpha_1^2 \cdot v} \left ( d \log \frac{1}{v} + \log \frac{1}{\delta} \right)\right) = \Omega\left( \frac{1}{\epsilon^2} \left ( d  + \log \frac{1}{\delta} \right)\right) $, with probability $1 - \delta$ for all $q \in Q$ where $Q_\gamma \subset Q$ . \\

%%% end of tp calculation...

%%% fn approximation and sample size ...

For $fn$ approximation, 

$$\forall q \in Q; d_v \left(\sum_{y \in Y^+}\frac{1}{Y^+} \delta_y, \frac{1}{|S_2|} \sum_{y \in S_2}\delta_y\right ) \le \alpha_2$$

where, $\delta_y = 1(y \in fn)$ and $d_v(a, b) = \frac{|a - b|}{a+b+v}$

\begin{align*}
     &= \frac{\left | \sum_{y \in Y^+}\frac{1}{Y^+} \delta_y - \frac{1}{|S_2|} \sum_{y \in S_2}\delta_y \right |}{ \sum_{y \in Y^+}\frac{1}{Y^+} \delta_y + \frac{1}{|S_2|} \sum_{y \in S_2}\delta_y + v} \le \alpha_2\\
     & = \frac{\left | \frac{fn}{Y^+} - \frac{\widetilde{fn}}{|S_2|} \right |}{ \frac{fn}{Y^+} + \frac{\widetilde{fn}}{|S_2|} + v} \le \alpha_2 \\
     & = \left | \frac{fn}{Y^+} - \frac{\widetilde{fn}}{|S_2|} \right | \le 3 \alpha_2
\end{align*}

Since, $\frac{fn}{Y^+}$ and $\frac{\widetilde{fn}}{|S_2|}$ is less than one, lets take $v = \frac{1}{2}$.

$$\widetilde{fn} \in fn \left (\frac{S_2}{Y^+}\right) \pm S_2 \cdot (3\alpha_2)$$ \\

where, $\epsilon = (3\alpha_2)$ for above additive error epsilon, and therefore, $\alpha_2 = \frac{\epsilon}{3}$ \\

Now, size of samples require to satisfy the above approximation using Theorem 5.1 is,
$S_2 = \Omega\left( \frac{1}{\alpha_2^2 \cdot v} \left ( d \log \frac{1}{v} + \log \frac{1}{\delta} \right)\right) = \Omega\left( \frac{1}{\epsilon^2} \left ( d  + \log \frac{1}{\delta} \right)\right) $, with probability $1 - \delta$ for all $q \in Q$ where $Q_\gamma \subset Q$ . \\

%%% end of fn calculation...

%%% fp approximation and sample size ...

For $fp$ approximation, 

$$\forall q \in Q; d_v \left(\sum_{z \in Y^-}\frac{1}{Y^-}  \delta_z, \frac{1}{|S_3|} \sum_{z \in S_3}\delta_z \right ) \le \alpha_3$$

where, $\delta_z = 1(z \in fp)$ and $d_v(a, b) = \frac{|a - b|}{a+b+v}$

\begin{align*}
     &= \frac{\left | \sum_{z \in Y^-}\frac{1}{Y^-} \delta_z - \frac{1}{|S_3|} \sum_{z \in S_3}\delta_z \right |}{ \sum_{z \in Y^-}\frac{1}{Y^-} \delta_z + \frac{1}{|S_3|} \sum_{z \in S_3}\delta_z + v} \le \alpha_3\\
     & = \frac{\left | \frac{fp}{Y^-} - \frac{\widetilde{fp}}{|S_3|} \right |}{ \frac{fp}{Y^-} + \frac{\widetilde{fp}}{|S_3|} + v} \le \alpha_3 \\
     & = \left | \frac{fp}{Y^-} - \frac{\widetilde{fp}}{|S_3|} \right | \le 3 \alpha_3
\end{align*}

Since, $\frac{fp}{Y^-}$ and $\frac{\widetilde{fp}}{|S_3|}$ is less than one, lets take $v = \frac{1}{2}$.

$$\widetilde{fp} \in fp \left (\frac{S_3}{Y^-}\right) \pm S_3 \cdot (3\alpha_3)$$

%%%%%

where, $\epsilon = (3\alpha_3) $ for above additive error epsilon, and therefore, $\alpha_3 = \frac{\epsilon}{3}$ \\

Let, $\widetilde{\widetilde{fp}}$ defined as following,

$$\widetilde{\widetilde{fp}} = \left (\frac{Y^-}{Y^+} \right) \widetilde{fp} \in fp \left (\frac{S_3}{Y^+}\right) \pm S_3 \cdot (3\alpha_3) \left (\frac{Y^-}{Y^+} \right) $$ \\

Basically we are reweighing all negative class data points by $ \left (\frac{Y^+}{Y^-} \right)$ and thus this term will also comes in the calculation of the size to satisfy $\widetilde{\widetilde{fp}}$. \\

After applying reweighing our modified $\alpha_3$ is defined as, $\alpha_{3}^{'} = \frac{\epsilon}{3} \left (\frac{Y^+}{Y^-} \right)$. 

Now, size of samples require to satisfy the above approximation using Theorem 5.1 is,
$S_3 = \Omega\left( \frac{1}{(\alpha_{3}^{'})^2 \cdot v} \left ( d \log \frac{1}{v} + \log \frac{1}{\delta} \right)\right) = \Omega\left( \left (\frac{Y^-}{Y^+} \right)^{2} \frac{1}{\epsilon^2} \left ( d  + \log \frac{1}{\delta} \right)\right) $, with probability $1 - \delta$ for all $q \in Q$ where $Q_\gamma \subset Q$ . \\

%%% end of fp calculation...

%%% tn approximation and sample size ...

For $tn$ approximation, 

$$\forall q \in Q; d_v \left(\sum_{w \in Y^-}\frac{1}{Y^-} \delta_w, \frac{1}{|S_4|} \sum_{w \in S_4}\delta_w\right ) \le \alpha_4$$

where, $\delta_w = 1(w \in tn)$ and $d_v(a, b) = \frac{|a - b|}{a+b+v}$

\begin{align*}
     &= \frac{\left | \sum_{w \in Y^-}\frac{1}{Y^-} \delta_w - \frac{1}{|S_4|} \sum_{w \in S_4}\delta_w \right |}{ \sum_{w \in Y^-}\frac{1}{Y^-} \delta_w + \frac{1}{|S_4|} \sum_{w \in S_4}\delta_w + v} \le \alpha_4\\
     & = \frac{\left | \frac{tn}{Y^-} - \frac{\widetilde{tn}}{|S_4|} \right |}{ \frac{tn}{Y^-} + \frac{\widetilde{tn}}{|S_4|} + v} \le \alpha_4 \\
     & = \left | \frac{tn}{Y^-} - \frac{\widetilde{tn}}{|S_4|} \right | \le 3 \alpha_4
\end{align*}

Since, $\frac{tn}{Y^-}$ and $\frac{\widetilde{tn}}{|S_4|}$ is less than one, lets take $v = \frac{1}{2}$.

$$\widetilde{tn} \in tn \left (\frac{S_4}{Y^-}\right) \pm S_4 \cdot (3\alpha_4)$$

%%%%%%%

where, $\epsilon = (3\alpha_4) $ for above additive error epsilon, and therefore, $\alpha_4 = \frac{\epsilon}{3}$ \\

Let, $\widetilde{\widetilde{tn}}$ defined as following,

$$\widetilde{\widetilde{tn}} = \left (\frac{Y^-}{Y^+} \right) \widetilde{tn} \in tn \left (\frac{S_4}{Y^+}\right) \pm S_4 \cdot (3\alpha_4) \left (\frac{Y^-}{Y^+} \right) $$ \\

Basically we are reweighing all negative class data points by $ \left (\frac{Y^+}{Y^-} \right)$ and thus this term will also comes in the calculation of the size to satisfy $\widetilde{\widetilde{tn}}$. \\

After applying reweighing our modified $\alpha_4$ is defined as, $\alpha_{4}^{'} = \frac{\epsilon}{3} \left (\frac{Y^+}{Y^-} \right)$. 

Now, size of samples require to satisfy the above approximation using Theorem 5.1 is,
$S_4 = \Omega\left( \frac{1}{(\alpha_{4}^{'})^2 \cdot v} \left ( d \log \frac{1}{v} + \log \frac{1}{\delta} \right)\right) = \Omega\left( \left (\frac{Y^-}{Y^+} \right)^{2} \frac{1}{\epsilon^2} \left ( d  + \log \frac{1}{\delta} \right)\right) $, with probability $1 - \delta$ for all $q \in Q$ where $Q_\gamma \subset Q$ . \\

%%%%%%%

%%% end of tn calculation...

Thus, in-order to satisfy all four approximations viz. $tp$, $tn$, $fn$ and $fp$ we would require total sample size, 

\begin{align*}
S &= S_1 + S_2 + S_3 + S_4\\
& = \left(\frac{1}{\epsilon^2} \left ( d  + \log \frac{1}{\delta} \right) \left( 2 + 2 \cdot \left (\frac{Y^-}{Y^+} \right)^{2}\right) \right)
\end{align*}

Thus, with probability $ 1- 4\delta$, above approximation for $tp$, $tn$, $fn$ and $fp$ holds. \\

Now, for the MCC coreset left hand side guarantee,
\newpage
\onecolumn

\begin{align*}
    \begin{split}
        &\widetilde{MCC} =  \frac{\tilde{tp} - T^{'}\left( \tilde{tp} + \tilde{\tilde{fp}}\right)}{\sqrt{T^{'} (1- T^{'}) \left( \tilde{tp} + \tilde{\tilde{fp}} \right) \left( \tilde{\tilde{tn}} + \tilde{fn}\right)}} \\
        &\ge \frac{\frac{tp \cdot S}{Y^+} - ( S \cdot \epsilon) - T^{'} \left ( \frac{tp \cdot S}{Y^+} + ( S \cdot \epsilon) + \frac{fp \cdot S}{Y^+} + ( S \cdot \epsilon) \frac{Y^-}{Y^+} \right)}{\sqrt{T^{'} (1- T^{'}) \left[ \frac{tp \cdot S}{Y^+} + ( S \cdot \epsilon) + \frac{fp \cdot S}{Y^+} + ( S \cdot \epsilon) \frac{Y^-}{Y^+}  \right] \left[ \frac{tn \cdot S}{Y^+} + ( S \cdot \epsilon) \frac{Y^-}{Y^+} + \frac{fn \cdot S}{Y^+} + ( S \cdot \epsilon) \right]}} \\
        & = \frac{\frac{S}{Y^+} \left [ tp - (\epsilon \cdot Y^+) - T^{'} \left ( tp + fp + ( \epsilon \cdot Y^+) + (\epsilon \cdot Y^-) \right) \right]}{\sqrt{T^{'} (1- T^{'}) \frac{S}{Y^+} \left[ tp + fp + (\epsilon \cdot Y^+) + (\epsilon \cdot Y^-) \right] \frac{S}{Y^+} \left[ tn + fn + (\epsilon \cdot Y^-) + (\epsilon \cdot Y^+) \right]}} \\
        & = \frac{\frac{S}{Y^+} \left [ tp - T^{'}(tp + fp) - \left[ \epsilon \cdot Y^+ + T^{'} \cdot \epsilon \cdot n \right] \right]}{\sqrt{T^{'} (1- T^{'}) \frac{S}{Y^+} \left[ tp + fp + \epsilon \cdot n \right] \frac{S}{Y^+} \left[ tn + fn + \epsilon \cdot n \right]}} \\
        & = \frac{ tp - T^{'}(tp + fp) - 2 \cdot \epsilon \cdot Y^+}{\sqrt{T^{'} (1- T^{'}) \left[ tp + fp + \epsilon \cdot n \right] \left[ tn + fn + \epsilon \cdot n \right]}}\\
        & \ge \frac{ tp - T^{'}(tp + fp) - 2 \cdot \epsilon \cdot n}{\sqrt{T^{'} (1- T^{'}) \left (1 + \frac{\epsilon}{\gamma} \right) \left( tp + fp \right) \left (1 + \frac{\epsilon}{\gamma} \right) \left( tn + fn \right)}} ;  tp, tn \ge \gamma \cdot n\\
        & \ge \frac{ \left( 1  - \frac{2 \cdot \epsilon}{\gamma}\right) tp - T^{'}(tp + fp)}{\sqrt{T^{'} (1- T^{'}) \left (1 + \frac{\epsilon}{\gamma} \right) \left( tp + fp \right) \left (1 + \frac{\epsilon}{\gamma} \right) \left( tn + fn \right)}} \\
        & = \frac{ tp - T^{'}(tp + fp) -  \left( \frac{2 \cdot \epsilon}{\gamma} \right) \cdot tp}{ \left (1 + \frac{\epsilon}{\gamma} \right) \sqrt{T^{'} (1- T^{'}) \left( tp + fp \right) \left( tn + fn \right)}} \\
        & \ge \frac{MCC}{\left (1 + \frac{\epsilon}{\gamma} \right)} -  2 \cdot \epsilon \cdot C      
    \end{split}  
\end{align*}

Where, $C = \frac{ \left( \frac{1}{\gamma} \right) \cdot tp}{ \left (1 + \frac{\epsilon}{\gamma} \right) \sqrt{T^{'} (1- T^{'}) \left( tp + fp \right) \left( tn + fn \right)}}$, , which can be upper bounded by the following,

 $$C \le \frac{ \left( \frac{1}{\gamma} \right)}{ \left (1 + \frac{\epsilon}{\gamma} \right) \sqrt{T^{'} (1- T^{'}) \cdot \gamma}}$$ \\

We can have similar guarantee for the right hand side as follows.

\begin{align*}
    \begin{split}
        &\widetilde{MCC} =  \frac{\tilde{tp} - T^{'}\left( \tilde{tp} + \tilde{\tilde{fp}}\right)}{\sqrt{T^{'} (1- T^{'}) \left( \tilde{tp} + \tilde{\tilde{fp}} \right) \left( \tilde{\tilde{tn}} + \tilde{fn}\right)}} \\
        &\le \frac{\frac{tp \cdot S}{Y^+} + ( S \cdot \epsilon) - T^{'} \left ( \frac{tp \cdot S}{Y^+} - ( S \cdot \epsilon) + \frac{fp \cdot S}{Y^+} - ( S \cdot \epsilon) \frac{Y^-}{Y^+} \right)}{\sqrt{T^{'} (1- T^{'}) \left[ \frac{tp \cdot S}{Y^+} - ( S \cdot \epsilon) + \frac{fp \cdot S}{Y^+} - ( S \cdot \epsilon) \frac{Y^-}{Y^+}  \right] \left[ \frac{tn \cdot S}{Y^+} - ( S \cdot \epsilon) \frac{Y^-}{Y^+} + \frac{fn \cdot S}{Y^+} - ( S \cdot \epsilon) \right]}} \\
        & = \frac{\frac{S}{Y^+} \left [ tp + (\epsilon \cdot Y^+) - T^{'} \left ( tp + fp - ( \epsilon \cdot Y^+) + (\epsilon \cdot Y^-) \right) \right]}{\sqrt{T^{'} (1- T^{'}) \frac{S}{Y^+} \left[ tp + fp - (\epsilon \cdot Y^+) + (\epsilon \cdot Y^-) \right] \frac{S}{Y^+} \left[ tn + fn - (\epsilon \cdot Y^-) + (\epsilon \cdot Y^+) \right]}} \\
        & = \frac{\frac{S}{Y^+} \left [ tp - T^{'}(tp + fp) + \left[ \epsilon \cdot Y^+ + T^{'} \cdot \epsilon \cdot n \right] \right]}{\sqrt{T^{'} (1- T^{'}) \frac{S}{Y^+} \left[ tp + fp - \epsilon \cdot n \right] \frac{S}{Y^+} \left[ tn + fn - \epsilon \cdot n \right]}} \\
        & = \frac{ tp - T^{'}(tp + fp) + 2 \cdot \epsilon \cdot Y^+}{\sqrt{T^{'} (1- T^{'}) \left[ tp + fp - \epsilon \cdot n \right] \left[ tn + fn - \epsilon \cdot n \right]}}\\
        & \le \frac{ tp - T^{'}(tp + fp) + 2 \cdot \epsilon \cdot n}{\sqrt{T^{'} (1- T^{'}) \left (1 - \frac{\epsilon}{\gamma} \right) \left( tp + fp \right) \left (1 - \frac{\epsilon}{\gamma} \right) \left( tn + fn \right)}} ;  tp, tn \ge \gamma \cdot n \\
        & \le \frac{ \left( 1  + \frac{2 \cdot \epsilon}{\gamma}\right) tp - T^{'}(tp + fp)}{\sqrt{T^{'} (1- T^{'}) \left (1 - \frac{\epsilon}{\gamma} \right) \left( tp + fp \right) \left (1 - \frac{\epsilon}{\gamma} \right) \left( tn + fn \right)}} \\
        & = \frac{ tp - T^{'}(tp + fp) +  \left( \frac{2 \cdot \epsilon}{\gamma} \right) \cdot tp}{ \left (1 - \frac{\epsilon}{\gamma} \right) \sqrt{T^{'} (1- T^{'}) \left( tp + fp \right) \left( tn + fn \right)}} \\
        & \le \frac{MCC}{\left (1 - \frac{\epsilon}{\gamma} \right)} +  2 \cdot \epsilon \cdot C^{'}      
    \end{split}  
\end{align*}
Where, $ C^{'} = \frac{ \left( \frac{1}{\gamma} \right) \cdot tp}{ \left (1 - \frac{\epsilon}{\gamma} \right) \sqrt{T^{'} (1- T^{'}) \left( tp + fp \right) \left( tn + fn \right)}} $, which can be upper bounded by the following,

$$C^{'} \le \frac{ \left( \frac{1}{\gamma} \right)}{ \left (1 - \frac{\epsilon}{\gamma} \right) \sqrt{T^{'} (1- T^{'}) \cdot \gamma}}$$

\end{proof}

%% MCC proof end here..

\section{More Experimental Results}

Some more experiments with Adult dataset and MCC loss function are shown in the Figure 5. In that also, we can notice uniform sampling performs better or comparable with some computationally complex methods.

\begin{figure}
     \centering
     \begin{subfigure}[b]{0.3\textwidth}
         \centering
         \includegraphics[width= 1.25\textwidth, height=0.20\textheight]{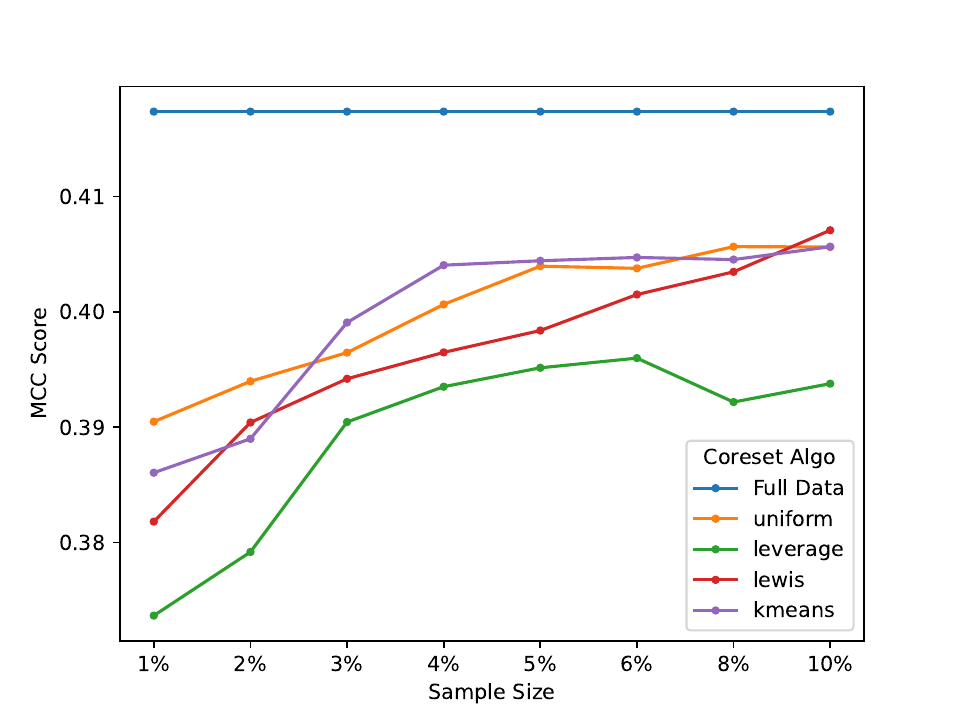}
         \caption{Adult Logistic}
         \label{fig:Adult Logistic}
     \end{subfigure}
     \hfill
     \begin{subfigure}[b]{0.3\textwidth}
         \centering
         \includegraphics[width= 1.25\textwidth, height=0.20\textheight]{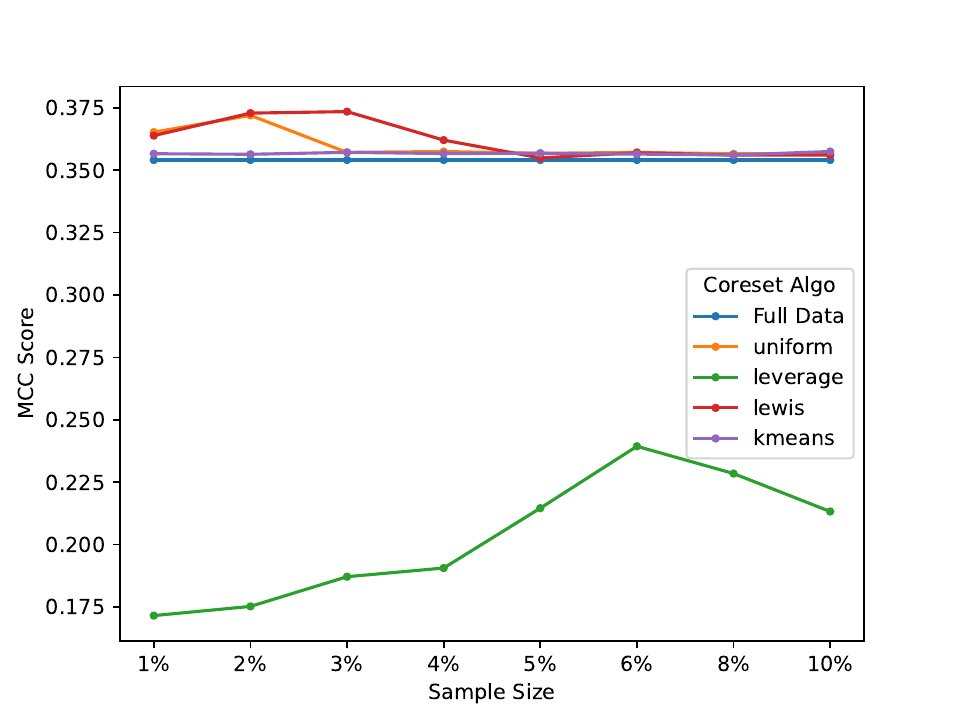}
         \caption{Adult SVM}
         \label{fig:Adult SVM}
     \end{subfigure}
     \hfill
     \begin{subfigure}[b]{0.3\textwidth}
         \centering
         \includegraphics[width= 1.25\textwidth, height=0.20\textheight]{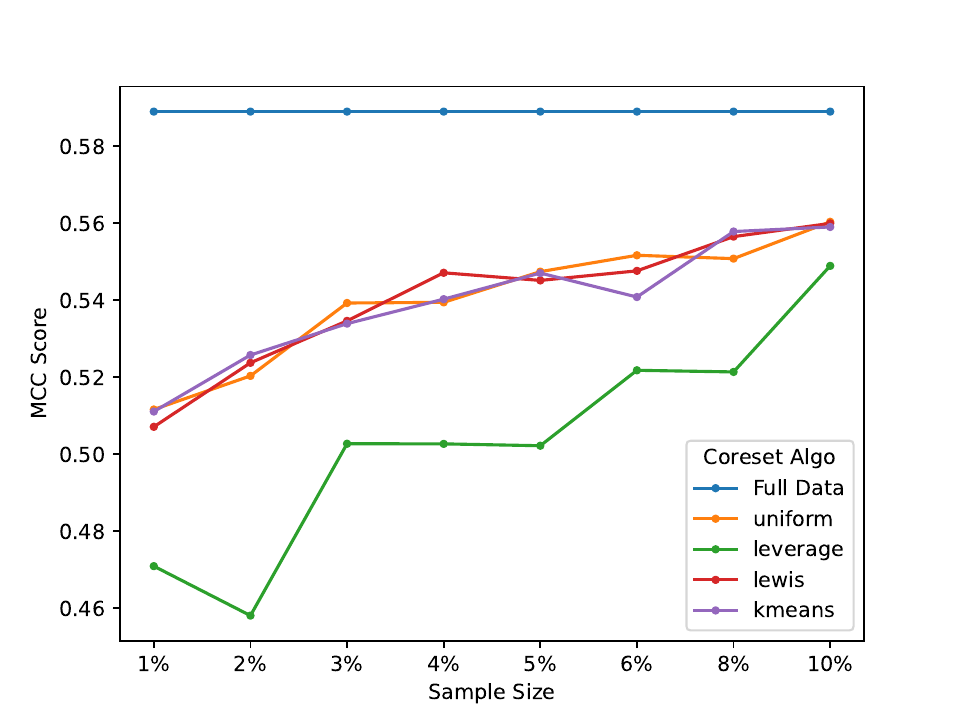}
         \caption{Adult MLP}
         \label{fig:Adult MLP}
     \end{subfigure}
        \caption{$MCC$ Score on Adult dataset for different classifiers and different coreset strategies.}
        \label{fig:MCC Score on Adult}
\end{figure}

%%%%%\end{comment} Appendix ends here.....

%%% rebuttal extra experiments 

The following tables show F1 score and MCC respectively for different coreset sizes and strategies for some other classifiers also. Here LR represents Logistic Regression, GB represents Gradient Boosting, and DT represents Decision Tree classifiers, respectively.

%% Full dataset table start

\begin{table}[!ht]
    \centering
    \begin{tabular}{|c|c|c|}
    \hline
        \textbf{Model Name} & \textbf{F1 Score} & \textbf{MCC Score} \\ \hline
        Logistic Regression & 0.5088 & 0.4394 \\ \hline
        Decision Tree & 0.5487 & 0.4862 \\ \hline
        Gradient Boosting & 0.5360 & 0.4716 \\ \hline
    \end{tabular}
    \caption{F1-Score and MCC-Score on Bank Marketing Full Dataset}
\end{table}

%% Full dataset table end

\begin{table}[!ht]
    \centering
    \begin{tabular}{|c|c|c|c|c|c|}
    \hline
        \textbf{Model} & \textbf{Coreset} & \textbf{2\%} & \textbf{5\%} & \textbf{8\%} & \textbf{10\%} \\ \hline
        LR & uniform & 0.4331 & 0.4335 & 0.4412 & 0.4400 \\ \hline
        LR & leverage & 0.3125 & 0.3356 & 0.3366 & 0.3323 \\ \hline
        LR & lewis & 0.4346 & 0.4260 & 0.4342 & 0.4282 \\ \hline
        LR & kmeans & 0.4002 & 0.3861 & 0.3922 & 0.3842 \\ \hline
        DT & uniform & 0.4623 & 0.5171 & 0.5322 & 0.5290 \\ \hline
        DT & leverage & 0.4482 & 0.4879 & 0.4880 & 0.4898 \\ \hline
        DT & lewis & 0.4889 & 0.5124 & 0.5217 & 0.5216 \\ \hline
        DT & kmeans & 0.4429 & 0.4819 & 0.5126 & 0.5184 \\ \hline
        GB & uniform & 0.5623 & 0.5830 & 0.5932 & 0.5972 \\ \hline
        GB & leverage & 0.5475 & 0.5769 & 0.5865 & 0.5862 \\ \hline
        GB & lewis & 0.5654 & 0.5834 & 0.5931 & 0.5946 \\ \hline
        GB & kmeans & 0.5377 & 0.5700 & 0.5775 & 0.5822 \\ \hline
    \end{tabular}
    \caption{F1-Score on Bank Marketing Dataset}
\end{table}

%%MCC table for bank marketing dataset

\begin{table}[!ht]
    \centering
    \begin{tabular}{|c|c|c|c|c|c|}
    \hline
        \textbf{Model} & \textbf{Coreset} & \textbf{2\%} & \textbf{5\%} & \textbf{8\%} & \textbf{10\%} \\ \hline
        LR & uniform & 0.3512 & 0.3516 & 0.3609 & 0.3593 \\ \hline
        LR & leverage & 0.2237 & 0.2536 & 0.2508 & 0.2460 \\ \hline
        LR & lewis & 0.3531 & 0.3434 & 0.3530 & 0.3463 \\ \hline
        LR & kmeans & 0.3123 & 0.2978 & 0.3032 & 0.2937 \\ \hline
        DT & uniform & 0.3926 & 0.4523 & 0.4686 & 0.4638 \\ \hline
        DT & leverage & 0.3748 & 0.4171 & 0.4191 & 0.4206 \\ \hline
        DT & lewis & 0.4219 & 0.4475 & 0.4569 & 0.4549 \\ \hline
        DT & kmeans & 0.3742 & 0.4146 & 0.4480 & 0.4538 \\ \hline
        GB & uniform & 0.5030 & 0.5293 & 0.5416 & 0.5466 \\ \hline
        GB & leverage & 0.4847 & 0.5196 & 0.5313 & 0.5315 \\ \hline
        GB & lewis & 0.5071 & 0.5297 & 0.5413 & 0.5433 \\ \hline
        GB & kmeans & 0.4734 & 0.5119 & 0.5232 & 0.5273 \\ \hline
    \end{tabular}
    \caption{MCC-Score on Bank Marketing Dataset}
\end{table}

\end{document}